\newcommand{\MMD}{\ensuremath{\mathrm{MMD}}}
\newcommand{\COS}{\ensuremath{\mathrm{COS}}}
\newcommand{\R}{\ensuremath{{\mathbb R}}}
\newtheorem{theorem}{Theorem}
\newtheorem{proposition}{Proposition}
\newtheorem{lemma}[theorem]{Lemma}
\theoremstyle{definition}
\newtheorem{remark}{Remark}
\newcommand*\rot{\rotatebox{90}}
\definecolor{verylightgray}{gray}{.75}
\definecolor{veryverylightgray}{gray}{.85}
\begin{document}

\twocolumn[

\arxivpapertitle{Validation of Composite Systems by Discrepancy Propagation}

\arxivpaperauthor{ David Reeb \And Kanil Patel \And Karim Barsim \And Martin Schiegg \And Sebastian Gerwinn }

\vspace{-0.05cm}

\begin{center}
	\texttt{\{david.reeb, kanil.patel, karim.barsim, martin.schiegg, sebastian.gerwinn\}\\@de.bosch.com}
\end{center}

\runningauthor{~}
\runningtitle{~}

\vspace{0.05cm}

\arxivpaperaddress{Robert Bosch GmbH, Bosch Center for Artificial Intelligence, 71272 Renningen, Germany} ]

\begin{abstract} 
Assessing the validity of a real-world system with respect to given quality criteria is a common yet costly task in industrial applications due to the vast number of required real-world tests. 
Validating such systems by means of simulation offers a promising and less expensive alternative, but requires an assessment of the simulation accuracy and therefore end-to-end measurements. 
Additionally, covariate shifts between simulations and actual usage can cause difficulties for estimating the reliability of such systems. 
In this work, we present a validation method that propagates bounds on distributional discrepancy measures through a composite system, thereby allowing us to derive an upper bound on the failure probability of the real system from potentially inaccurate simulations. 
Each propagation step entails an optimization problem, where -- for measures such as maximum mean discrepancy (MMD) -- we develop tight convex relaxations based on semidefinite programs. 
We demonstrate that our propagation method yields valid and useful bounds for composite systems exhibiting a variety of realistic effects. 
In particular, we show that the proposed method can successfully account for data shifts within the experimental design as well as model inaccuracies within the simulation.
\end{abstract}

\section{INTRODUCTION}\label{sec:introduction}
Industrial products cannot be released without a priori ensuring their validity, i.e.\ the product must be validated to work according to its specifications with high probability.
Such validation is essential for safety-critical systems (e.g.\ autonomous cars, airplanes, medical machines) or systems with legal requirements (e.g.\ limits on output emissions or power consumption of new vehicle types), see e.g.\ \citep{kalra2016driving,koopman2016challenges,belcastro2003validation}.
When relying on real-world testing alone to validate system-wide requirements, one must perform enough test runs to guarantee an acceptable failure rate, e.g.\ at least $\sim10^6$ runs for a guarantee below $10^{-6}$. This is costly not only in terms of money but also in terms of time-to-release, especially when a failed system test necessitates further design iterations.

System validation is particularly difficult for complex systems which typically consist of multiple components, often developed and tested by different teams under varying operating conditions. 
For example, an advanced driver-assistance system is built from several sensors and controllers, which come from different suppliers but together must guarantee to keep the vehicle safely on the lane. 
Similarly, the powertrain system of a vehicle consists of the engine or battery, a controller and various catalysts or auxiliary components, but is legally required to produce low output emissions of various gases or energy consumption per distance as a whole. 
In both these examples, the validation of the system can also be viewed as the validation of its control component, when the other subsystems are considered fixed. 
To reduce the costs of real-world testing including system assembly and release delays, one can employ \emph{simulations} of the composite system by combining models of the components, to perform {\emph{virtual validation}} of the system \citep{wong2020testing}.

\begin{figure*}[t]
	\centering
	\def\svgscale{0.4}
	\begin{tiny}
		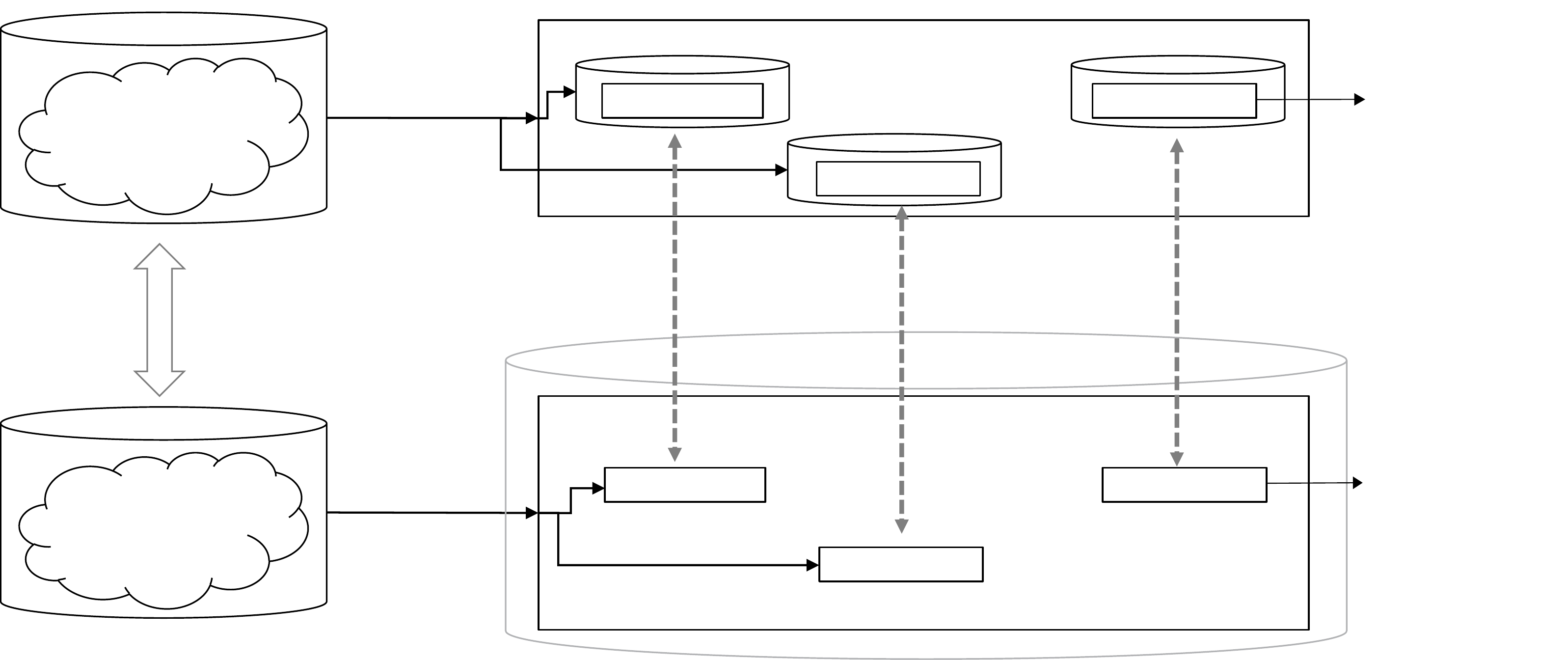 
	\end{tiny}
	\caption{Illustration of our validation task: A real, composite system of interest (top) is modeled with corresponding simulation models (bottom). Measurements of the real system are available only for the individual components, while end-to-end simulation data can be generated from the models. The task of the virtual validation method is to estimate the real system performance $S$ based on the simulations $M$, incorporating simulation model misfits w.r.t.\ the real-world components as well as any data-shift between the simulation input distribution and the field usage to be expected in the real system.
		\label{fig:virtual_system_analysis}}
\end{figure*}

However, it is difficult to assess how much such a composite virtual validation can be trusted, because the component models may be inaccurate w.r.t.\ the real-world components ({\emph{simulation model misfits}}) or the simulation inputs may differ from the distribution of real-world inputs ({\emph{data-shift}}). 
Incorporating these inaccuracies within the virtual validation analysis is particularly important for reliability analyses \citep{bect2012sequential,dubourg2013metamodel,wang2016gaussian} in industrial applications with safety or legal relevance as those described above, where falsely judging a system to be reliable is much more expensive than false negatives. 
For this reason, we desire -- if not an accurate estimate -- then 
at least an upper bound on its true failure probability. 
Existing validation methods are especially lacking the composite (multi-component) aspect, where measurement data are available only for each individual component (Sec.\ \ref{sec:related_work}).

To state the problem mathematically, the goal of this work is to estimate an upper bound $F_\text{max}$  on the failure probability $\mathrm{Pr}\big[S(x)>\tau\big]$ of a system over real-world inputs $x\sim p(x)$:
\begin{align}
F_\text{max} \geq \underset{x,S}{\mathrm{Pr}}\big[S(x)>\tau\big] &= \int \mathds{1}_{S(x)>\tau} dS(x)dp(x), 
\label{eq:failure_prob}
\end{align}
where $S(x)$ measures the system performance upon input $x$, and $\tau$ is a critical performance threshold indicating a \emph{system failure}.
In the virtual validation setup, we assume that no end-to-end measurements from the full composite system $S$ are available, and thus the upper bound $F_\text{max}$ is to be estimated from the simulation $M$ composed of models $M^1,M^2,\ldots$, which are assumed to be given. 
This estimate must take into account \textit{model misfits} and \emph{data-shift} in the simulation input distribution (see Fig.\ \ref{fig:virtual_system_analysis}).
To assess the model misfits, we assume validation measurements from the individual components $S^1,S^2,\ldots$ to be given, e.g.\ from component-wise development (for details see Sec.\ \ref{sec:setup}).

In this paper, we develop a method to estimate $F_\text{max}$ from simulation runs by propagating bounds on distributional distances between simulation models and real-world components through the composite system.
This propagation method incorporates model misfits and data-shifts in a pessimistic fashion by iteratively maximizing for the worst-case output distribution that is consistent with previously computed constraints on the input.
Importantly, our method requires models and validation data from the individual components only, not from the full system.

Our main contributions can be summarized as follows:
\begin{enumerate}
	\item We propose a novel, distribution-free bound on the distance between simulation-based and real-world distributions, 
	without the need to have end-to-end measurements from the real world (Sec.\ \ref{subsec:validation-method}).
	\item We justify the method theoretically (Prop.\ \ref{prop:convergence}) and show its practicality in reliability benchmarks (Sec.\ \ref{sec:reli-benchm-eval}).
	\item We demonstrate that -- in contrast to alternative methods -- the proposed method can account for data-shifts as well as model inaccuracies (Sec.\ \ref{sec:experiments}).
\end{enumerate}

\section{RELATED WORK}\label{sec:related_work}
Estimating the failure probability of a system is a core task in reliability engineering.
In the reliability literature, one focus is on making this estimation more efficient compared to naive Monte Carlo sampling by reducing the variance on the estimator of the failure probability.
Such classical methods include importance sampling \citep{rubinstein2004cross}, 
subset sampling \citep{au2001estimation}, 
line sampling \citep{pradlwarter2007application}, and
first-order \citep{hohenbichler1987new,du2012first,zhang2015first} or second-order \citep{kiureghian1991efficient,lee2012novel} Taylor expansions.
While being more efficient, they still require a large number of end-to-end function evaluations and {cannot incorporate more detailed simulations}.

Another line of research investigates how to reduce real-world function evaluations through virtualization of this performance estimation task \citep{xu2021machine}.
The failure probability is estimated based on a surrogate model and hence cannot account for mismatches between the system and its surrogate.
\citet{dubourg2013metamodel} proposed a hybrid approach, where the proposal distribution of the importance sampling depends on the learned surrogate model. 
While this approach accounts, to some extent, for model mismatches, the proposal distribution might still be biased by a poor surrogate model.
In summary, none of the approaches that are based on surrogate models provide a reliable bound on the true failure probability.
Furthermore, all these approaches require end-to-end measurements from the real system, 
ignoring the composite structure of the system.

In practice, however, the system output $S(\cdot)$ in Eq.\ \eqref{eq:failure_prob} refers to a complex system that often has a \emph{composite structure}.
That is, global inputs $x$ propagate through an arrangement, oftentimes termed a \emph{function network}, of subsystems or components, see Fig.\ \ref{fig:virtual_system_analysis}.
Exploiting such a structure is expected to have a notable impact on the target task, be it experimental design \citep{marque2019efficient},
calibration and optimization \citep{astudillo2019bayesian,astudillo2021bayesian,kusakawa2021bayesian,xiao2022projection}, 
uncertainty quantification \citep{sanson2019systems}, 
or system validation as presented here.

In the context of Bayesian Optimization (BO), for example, \citet{astudillo2021bayesian} 
construct a surrogate system of Gaussian Processes (GP) that mirrors the compositional structure of the system.
Similarly, \citet{sanson2019systems} discuss similarities of such structured surrogate models to Deep GPs \citep{damianou2013deep}, and extend this framework to local evaluations of constituent components.
However, learning (probabilistic) models of inaccuracies \citep{sanson2019systems,riedmaier2021unified} introduces further modeling assumptions and cannot account for data-shifts.
Instead, we aim at model-free worst-case statements.

\citet{marque2019efficient} showed that a composite function can be efficiently modeled from local evaluations of constituent components in a sequential design approach.
\citet{friedman2021adaptive} extend this framework to cyclic structures of composite systems for adaptive experimental design.
They derive bounds on the simulation error in composite systems, although assuming knowledge of Lipschitz constants as well as uniformly bounded component-wise errors.

Stitching different datasets covering the different parts of a larger mechanism without loosing the causal relation was analyzed by \citet{chau2021bayesimp} and corresponding models were constructed, but the quality with which statements about the real mechanism can be made was not analyzed.

Bounding the test error of models under input-datashift was analyzed empirically in \citet{jiang2021assessing} by investigating the disagreement between different models. Although they find a correlation between disagreement and test error, the authors do not provide a rigorous bound on the test error (Sec.\ \ref{sec:uncertainty-wrapper-method}) and also cannot incorporate an existing simulation model into the analysis.

\section{METHOD}\label{sec:method}

\subsection{Setup: Composite System Validation}\label{sec:setup}
We consider a \emph{(real) system} or \emph{system under test} $S$ that is composed of subsystems $S^c$ ($c=1,2,\ldots,C$), over which we have only limited information. 
The validation task is to determine whether $S$ conforms to a given specification, such as whether the system output $y=S(x)$ stays below a given threshold $\tau$ for typical inputs $x$ -- or whether the system's \emph{probability of failure}, defined as violating the threshold, is sufficiently low, see Eq.\ (\ref{eq:failure_prob}).
Our approach to this task is built on a \emph{model} $M$ (typically a simulation, with no analytic form) of $S$ that is similarly composed of corresponding sub-models $M^c$. 
The main challenge in assessing the system's failure probability lies in determining how closely $M$ approximates $S$, in the case 
where the system data originate from disparate component measurements, which cannot be combined to consistent end-to-end data.

\textbf{Components and signals.} Mathematically, each component of $S$ -- and similarly for $M$ -- is a (potentially stochastic) map $S^c$, which upon input of a signal $x^c$ produces an output signal (sample) $y^c\sim S^c(\cdot|x^c)$ according to the conditional distribution $S^c$.
The stochasticity allows for aleatoric system behavior or unmodeled influences.
We consider the case where all signals are tuples $x^c=(x^c_1,\ldots,x^c_{d^c_{in}})$, such as real vectors.
The allowed ``compositions'' of the subsystems $S^c$ must be such that upon input of any signal (stimulus) $x$, an output sample $y\sim S(\cdot|x)$ can be produced by iterating through the components $S^c$ in order $c=1,2,\ldots,C$. 
More precisely, we assume that the input signal $x^c$ into $S^c$ is a concatenation of some entries $x|_{0\to c}$ of the overall input tuple $x$ and entries $y^{c'}|_{c'\to c}$ of some \emph{preceding} outputs $y^{c'}$ (with $c'=1,\ldots,c-1$); 
thus, $S^c$ is ready to be queried right after $S^{c-1}$. 
We assume the overall system output $y=y^C\in{\mathbb R}$ to be real-valued as multiple technical performance indicators (TPIs) could be considered separately or concatenated by weighted mean, etc. 
The simplest example of such a composite system is a \emph{linear chain} $S=S^C\circ\ldots\circ S^2\circ S^1$, where $x\equiv x^1$ is the input into $S^1$ and the output of each component is fed into the next, i.e.\ $x^{c+1}\equiv y^c$. 
Another example is shown in Fig.\ \ref{fig:virtual_system_analysis}, where $x^3$ is concatenated from both outputs $y^1$ and $y^2$. 
We assume the identical compositional structure for the model $M$ with components $M^c$.

\textbf{Validation data.} An essential characteristic of our setup is that neither $S$ nor the subsystem maps $S^c$ are known explicitly, and that ``end-to-end'' measurements $(x,y)$ from the full system $S$ are unavailable (see Sec.\ \ref{sec:introduction}).
Rather, we assume that \emph{validation data} are available only for every subsystem $S^c$, i.e.\ pairs $(x^c_v,y^c_v)$ of inputs $x^c_v$ and corresponding output samples $y^c_v\sim S^c(\cdot|x^c_v)$ ($v=1,\ldots,V^c$).
Such validation data may have been obtained by measuring subsystem $S^c$ in isolation on some inputs $x^c_v$, without needing the full system $S$;
note, the inputs $x^c_v$ do not necessarily follow the distribution from previous components.
In the same spirit, the models $M^c$ may also have been trained from such ``local'' system data; we assume $M^c$, $M$ to be given from the start.

\textbf{Probability distributions.} We aim at probabilistic validation statements, namely that the system fails or violates its requirements only with low probability. 
For this, we assume that $S$ is repeatedly operated in a situation where its inputs come from a distribution $x\sim p_x$, in an i.i.d.\ fashion.
For the example where $S$ is a car, the input $x$ might be a route that typical drivers take in a given city. 
Importantly, we do not assume much knowledge about $p_x$: 
merely a number of samples $x_v\sim p_x$ may be given, or alternatively its distance to the simulation input distribution $q_x=\frac{1}{n_M}\sum_{n=1}^{n_M}\delta_{x^M_n}$; 
here, $\delta_{x^M_n}$ are point measures on the input signals $x^M_n$ on which $M$ is being simulated.
The input distribution $x\sim p_x$ will induce a (joint) distribution $p$ of all intermediate signals $x^c,y^c$ of the composite system $S$ and importantly the TPI output $y=y^C\sim S(x)$. 
Similarly, $M$ generates a joint model distribution $q$ by starting from $q_x$ and sampling through all $M^c$; 
via this simulation, we assume $q$ and all its marginals on intermediate signals $x^c,y^c$ to be available in sample-based form.
The \emph{(true) failure probability} is given by $p_\text{fail}=\int \mathds{1}_{S>\tau}dS(x)dp(x)$, where in this paper we identify a system failure as the TPI exceeding the given threshold $\tau$.
The model failure probability is $q_\text{fail}=\int \mathds{1}_{M>\tau}dM(x)dq(x) \simeq \frac{1}{n_M}\sum_{n}\mathds{1}_{y^M_n>\tau}$, where $y^M_n$ denote sampled model TPI outputs for the inputs $x^M_n$.
It is often useful in our setting to think of a distribution as a set of sample points, and vice versa.

\textbf{Discrepancies.} To track how far the simulation model $M$ diverges from the true system behavior $S$ in our probabilistic setting, we employ discrepancy measures $D$ between probability distributions. 
Such a measure $D$ maps two probability distributions $p,q$ over the same space to a real number, often having some interpretation of distance. 
We consider MMD distances $D=\MMD_k$ \citep{gretton2012kernel}, defined as the RKHS norm $\MMD_k(p,q)=\|p-q\|_k=\big[\int_{x,x'}(p(x)-q(x))k(x,x')(p(x')-q(x'))dxdx'\big]^{1/2}$ w.r.t.\ a kernel $k$ on the underlying space (e.g.\ a squared-exponential or IMQ kernel \citep{gorham2017measuring}). 
Further possibilities include the cosine similarity $\COS_k(p,q)=\langle p,q\rangle_k/\|p\|_k\|q\|_k$ w.r.t.\ a kernel $k$, a Wasserstein distance $D=W_p$ w.r.t.\ a metric on the space, and the total variation norm $D=TV$ \citep{IPM_paper_gretton_2009,sriperumbudur2010non}; however, the latter cannot be estimated reliably from samples.

Specifically, we assume a discrepancy measure $D^{c'\to c}$ to be given\footnote{We will later address how to choose $D$ from a parameterized family $D_\ell$, e.g.\ with different lengthscales $\ell$.} for those pairs $0\leq c'<c\leq C+1$ for which (a sub-tuple of) the output signal $y^{c'}$ is fed into the input $x^c$ (cf.\ the compositional structure above, and where we define $y^{c'=0}\equiv x$ and $x^{c=C+1}\equiv y:=y^C$). 
This $D^{c'\to c}$ acts on probability distributions over the space of such sub-tuples like $y^{c'}|_{c'\to c}$ (or synonymously, $x^c|_{c'\to c}$), which is defined as the signal entries running from $y^{c'}$ to $x^c$. We denote the marginal of $p$ on these signal entries by $p|_{c'\to c}$, and similar $q|_{c'\to c}$ for $q$. 
In the simplest case of a linear chain, $D^{c'\to c}$ with $c'=c-1$ acts on probability distributions such as $p|_{c'\to c}$ over the space of the (full) vectors $y^{c'}=x^c$. 
We omit superscripts $D^{c'\to c}\equiv D$ when clear from the context.

Our method requires (upper bounds on) the discrepancies $D(p|_{0\to c},q|_{0\to c})$ between marginals of the system and model input distributions $p_x,q_x$; 
specifically between the marginal distributions $p|_{0\to c}$ and $q|_{0\to c}$ over those sub-tuples $x|_{0\to c}$ which are input to subsequent components $c$. 
These discrepancies can either be estimated from samples $x_v,x^M_n$ of $p_x,q_x$, see the biased and unbiased estimates for MMD in \citet{gretton2012kernel}[App.\ A.2, A.3], which are accurate up to at most $\sim\sqrt{(1/n_{min})\log(1/\delta)}$ at confidence level $1-\delta$ (where $n_{min}$ denotes the size of the smaller of both sample sets); alternatively, these discrepancies may be directly given or upper bounded. 
These upper bounds are the quantities $B^{0\to c}$ below in Eq.\ (\ref{general-max-discrepancy-objective}). 
No further knowledge of the real-world input distribution $p_x$ is required.

\subsection{Discrepancy Propagation Method}\label{subsec:validation-method}

We now describe the key step in our method to quantify how closely the model's TPI output distribution, which we denote by $q_y\equiv q|_{C\to C+1}$, approximates the actual (but unknown) system output distribution $p_y\equiv p|_{C\to C+1}$. 
We do this by iteratively propagating worst-case discrepancy values through the (directed and acyclic) graph of components $S^c$/$M^c$, using only the available information, in particular the given validation data $(x^c_v,y^c_v)$ on a per-subsystem basis.

\textbf{Discrepancy bound propagation.} The basic idea is to go through the components $c=1,2,\ldots,C$ one-by-one. 
At each step $S^c$, we consider the ``input discrepancies'' $D(p|_{c'\to c},q|_{c'\to c})$ (for $c'<c$), about which we already have information, and propagate this to gain information about the ``output discrepancies'' $D(p_{c\to c''},q|_{c\to c''})$ (for $c''>c$). 
Here, we consider ``information'' in the form of inequalities $D(p|_{c'\to c},q|_{c'\to c})\leq B^{c'\to c}$, i.e.\ the information is the value of the (upper) bound $B^{c'\to c}$. 
Given bounds $B^{c'\to c}$ on the input signal of $S^c$, an upper bound on $D(p|_{c\to c''},q|_{c\to c''})$ for each fixed $c''>c$ can be found by maximizing the latter discrepancy over all (unknown) distributions $p$ that satisfy all the input discrepancy bounds:
\begin{align} 
B^{c\to c''} = \text{maximize}_p~&D(p|_{c\to c''},q|_{c\to c''})\label{general-max-discrepancy-objective}\\
\text{subject to}~~&D(p|_{c'\to c},q|_{c'\to c})\leq B^{c'\to c}~\forall c'<c. \nonumber
\end{align}
Note that the (sample-based) model distribution $q$ and its marginals in (\ref{general-max-discrepancy-objective}) are known and fixed after the simulation $M$ has been run on the input samples $x_n^M$ which constitute $q_x$ (see above).
In contrast, as the actual $p$ is not known, we maximize over all possible system distributions $p$ in (\ref{general-max-discrepancy-objective}) according to the bounds from the previous components $c'$.

It remains to optimize over all possible sets of marginals $p|_{c\to c''},p|_{c'\to c}$ (for all $c'<c$) occurring in (\ref{general-max-discrepancy-objective}). 
Ideally, one would consider all distributions $p(x^c)$ over input signals $x^c$, apply $S^c$ to each $x^c$ to obtain all possible joint distributions $p(x^c,y^c)=p(x^c)S^c(y^c|x^c)$ of in- and outputs, and compute from this all possible sets of marginals $p|_{c\to c''},p|_{c'\to c}$. 
However, this is impossible as we do not know the action of $S^c$ on every possible input $x^c$.
Rather, we merely know about the action of $S^c$ on the validation inputs $x^c_v$, namely that $y^c_v\sim S^c(x^c_v)$ is a corresponding output sample.
We thus consider only the joint distributions $p(x^c,y^c)=p_\alpha$ that can be formed from the given validation data (Fig.\ \ref{input-output-distribution})\footnote{$\delta_{z_0}$ denotes a Dirac point mass at $z=z_0$.}:
\begin{align}\label{eq:p-alpha-joint}
p_\alpha=\sum_{v=1}^{V^c}\alpha_v\delta_{x^c_v}\delta_{y^c_v},
\end{align}
such that the optimization variable becomes now a probability vector $\alpha\in{\mathbb R}^{V^c}$, i.e.\ with nonnegative entries $\alpha_v\geq0$ summing to $\sum_v\alpha_v=1$. 
By restricting to this (potentially skewed) set of joint distributions $p_\alpha$, the exact bound $B^{c\to c''}$ turns into an estimate; for further discussion see Prop.\ \ref{prop:convergence}, which also proposes another possible parametrization $p_\alpha$.

\begin{figure}[t]
	\begin{center}
		\includegraphics[width=0.85\columnwidth]{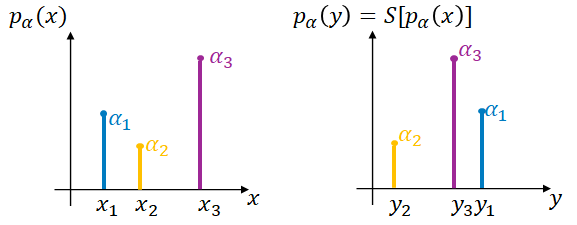}
	\end{center}
	\caption{Illustration of the (marginals of the) joint input-output distribution $p_\alpha$ (\ref{eq:p-alpha-joint}), parameterized by weights $\alpha_v$. Corresponding in-/outputs $x_v,y_v$ have the same weight $\alpha_v$.\label{input-output-distribution}}
\end{figure}

Using ansatz (\ref{eq:p-alpha-joint}), the exact bound propagation (\ref{general-max-discrepancy-objective}) becomes:
\begin{align} 
B^{c\to c''} =\text{max}_\alpha~&D(p_\alpha|_{c\to c''},q|_{c\to c''})\label{max-discrepancy-objective-alpha}\\
\text{s.t.}~~&D(p_\alpha|_{c'\to c},q|_{c'\to c})\leq B^{c'\to c}~~\forall c'<c,\nonumber\\
&\alpha\geq0,~~\sum_v\alpha_v=1. \nonumber
\end{align}
Note that for sample-based distributions like $p_\alpha$ in (\ref{eq:p-alpha-joint}) or $q$, the marginals in this optimization have a similar form, e.g.\ $p_\alpha|_{c\to c''}=\sum_v\alpha_v\delta_{y^c_v|_{c\to c''}}$ or $p_\alpha|_{c'\to c}=\sum_v\alpha_v\delta_{x^c_v|_{c'\to c}}$.

As the discrepancy measures $D$ in (\ref{max-discrepancy-objective-alpha}) are usually convex, this optimization problem is ``almost'' convex: 
All its constraints are convex, however, we aim to \emph{maximize} a convex objective. 
For MMD measures we derive convex (semidefinite) relaxations of (\ref{max-discrepancy-objective-alpha}) by rewriting it with squared MMDs $D(p_\alpha,q)^2$, which are quadratic in $\alpha$ and thus \emph{linear} in a new matrix variable $A=\alpha\alpha^T$; 
this last equality is then relaxed to the semidefinite inequality $A\geq\alpha\alpha^T$ (App.~\ref{app:semidefinite-relaxation}). 
While the relaxation is tight in most instances (App.\ \ref{sec:empirical-tightness}), the number of variables increases from $V^c$ to $\sim\!(V^c)^2/2$, restricting the method to $V^c\lesssim10^3$ validation samples per component. 
In our implementation, we solve these SDPs using the CVXPY package \citep{diamond2016cvxpy}.

\textbf{Bounding the failure probability.} The final step of the preceding \emph{bound propagation} yields an upper bound $B^y:=B^{C\to C+1}$ on the discrepancy $D(p_y,q_y)$ between the (unknown) system TPI output distribution $p_y$ and its model counterpart $q_y$, which is given by samples $y^M_n$. 
We now apply an idea similar to (\ref{eq:p-alpha-joint}) to obtain (a bound on) the system failure probability $p_\text{fail}:=\int_{y>\tau}p_y(y)dy$: 
Rather than maximizing $p_\text{fail}$ over all distributions $p_y$ on ${\mathbb R}\ni y$ subject to the constraint $B^y$, we make the optimization finite-dimensional by selecting grid-points $g_1<g_2<\ldots<g_V\in{\mathbb R}$ and parameterizing $p_y\equiv p_\alpha=\sum_{v=1}^V\alpha_v\delta_{g_v}$, such that $p_\text{fail}=\sum_{v:g_v>\tau}\alpha_v$. 
In practice, we choose an equally-spaced grid in an interval $[g_\text{min},g_\text{max}]\subset{\mathbb R}$ that covers the ``interesting'' or ``plausible'' TPI range, such as the support of $q_y$ as well as sufficient ranges below and above the threshold $\tau$. 
The size of the optimization problem corresponds to the number of grid-points $V$, so $V\!\simeq\!10^3$ is easily possible here.

With this, our final upper bound $p_\text{fail}\leq F_\text{max}$ on the failure probability becomes the following convex program:
\begin{align}\label{max-failure-objective-alpha}
F_\text{max}~ = \text{max}_\alpha~&\sum_{v:g_v>\tau}\alpha_v\\
\text{s.t.}~~&D(p_\alpha,q_y)\leq B^y,~~\alpha\geq0,~~\sum_v\alpha_v=1. \nonumber
\end{align}
One can obtain better (i.e.\ smaller) bounds $F_\text{max}$ by restricting $p_\alpha$ further by plausible assumptions: 
(a) \emph{Monotonicity:} 
When bounding a tail probability, i.e.\ $p_\text{fail}$ is expected to be small, it may be reasonable to assume that $p_y$ is monotonically decreasing beyond some tail threshold $\tau'$. 
For an equally-spaced grid this adds constraints $\alpha_v\leq\alpha_{v-1}$ for all $v$ with $g_v\geq\tau'$ to (\ref{max-failure-objective-alpha}); 
we always assume this with $\tau':=\tau$. 
(b) \emph{Lipschitz condition:} 
To avoid that $p_\alpha$ becomes too ``spiky'', we pose a Lipschitz condition $|\alpha_{v+1}-\alpha_v|\leq\Lambda_\text{max}|g_{v+1}-g_v|$ with a constant $\Lambda_\text{max}$ estimated from the set of outputs $y^M_n$. 
See also App.\ \ref{app:violation-optimization}.

Note that our final bound $F_\text{max}$ is a probability, whose interpretation is independent of the chosen discrepancy measures, kernels, or lengthscales. 
We can thus select these ``parameters'' by minimizing the finally obtained $F_\text{max}$ over them. 
We do this using Bayesian optimization \citep{frohlich2020noisy}.

We summarize our full discrepancy propagation method to obtain a bound $F_\text{max}$ on the system's failure probability in Algorithm \ref{algorithm:DPBound}, which we refer to as \texttt{DPBound}.

\begin{algorithm}
	\caption{\texttt{DPBound}}
	\label{algorithm:DPBound}
	\begin{algorithmic}[1]
		\STATE {\bfseries Input:} compositional structure of $S$;  composite simulation model $M$; discrepancy measures $D$; validation data $(x_v^c,y_v^c)$; simulation input samples $\{x_n^M\}\equiv q_x$; either \textit{(a)} upper bounds $B^{0\to c}$ on input discrepancies or \textit{(b)} samples $x_v\sim p_x$ from the real-world input distribution.
		
		\STATE Run $M$ on all $x_n^M$; collect all intermediate signals $x^c_n,y^c_n$ to build the sample-based marginals $q|_{c'\to c}$ of $q$.
		
		\STATE In case \textit{(b)}, estimate $B^{0\to c}$ from $p_x\simeq\{x_v\}$ and $q_x$.
		
		\FOR{$c=1,\ldots,C$}
		\FOR{every $c''=c+1,\ldots,C+1$ connected to $c$}
		\STATE Compute $B^{c\to c''}$ via Eq.\ (\ref{max-discrepancy-objective-alpha}) (or via App.\ \ref{app:semidefinite-relaxation}).
		\ENDFOR
		\ENDFOR
		
		\STATE Using the thus obtained $B^y:=B^{C\to C+1}$, compute the final bound $F_\text{max}$ via Eq.\ (\ref{max-failure-objective-alpha}) (or via App.\ \ref{app:violation-optimization}).
	\end{algorithmic}
\end{algorithm}

\textbf{Upper bound property.} We replaced the optimization over all possible system distributions $p$ in (\ref{general-max-discrepancy-objective}) by the distributions $p_\alpha$ from (\ref{eq:p-alpha-joint}) due to the limited system validation data and to make the optimization tractable. 
This restricted and possibly skewed $p_\alpha$ can potentially cause $B^{c\to c''}$ and ultimately $F_\text{max}$ from (\ref{max-discrepancy-objective-alpha}),(\ref{max-failure-objective-alpha}) to not be true upper bounds on $D$ or even the system's (unknown) failure probability $p_\text{fail}$, although the worst-case tendency of the maximizations alleviates the issue. 
We investigate this in the experiments (Sec.\ \ref{sec:reli-benchm-eval}), and in the following proposition we state conditions under which (\ref{max-discrepancy-objective-alpha}),(\ref{max-failure-objective-alpha}) are upper bounds:
\begin{proposition}\label{prop:convergence}
	Suppose that for each component $c=1,\ldots,C$: 
	{\it(i)} the validation inputs $x^c_v$ cover the space of occurring inputs into $S^c$;
	{\it(ii)} (necessary only for components $S^c$ having stochastic output) the $\delta_{y^c_v}$ in the defining equation of $p_\alpha$ (Eq.\ (\ref{eq:p-alpha-joint})) is replaced by the system output distribution $S^c(x^c_v)$ (represented e.g.\ by samples or its kernel-mean embedding); 
	{\it(iii)} the grid $\{g_v\}$ covers the occurring TPI values (e.g.\ discrete and bounded). Then $p_\text{fail}\leq F_\text{max}$, where $F_\text{max}$ is defined by the computations in Eqs.\ (\ref{max-discrepancy-objective-alpha}) and (\ref{max-failure-objective-alpha}) (or alternatively, by the convex relaxations and forms in Apps.\ \ref{app:semidefinite-relaxation} and \ref{app:violation-optimization}).
\end{proposition}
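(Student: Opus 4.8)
The plan is to establish the upper-bound property by induction over the components $c=1,\ldots,C$, showing at each stage that the \emph{true} system distribution $p$ corresponds to a \emph{feasible} point of the maximization in question; the computed maximum is then automatically at least as large as the quantity it is meant to bound. The key structural remark is that convexity is irrelevant for this direction: to prove a lower bound on a maximum it suffices to exhibit a single feasible $\alpha^\ast$ attaining the target value. This is also why the conclusion survives the semidefinite relaxation of App.\ \ref{app:semidefinite-relaxation} and the reformulation of App.\ \ref{app:violation-optimization}, since a relaxation only enlarges the feasible set and hence cannot lower the optimum.

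First I would fix the induction hypothesis at component $c$: every input bound is genuine, $D(p|_{c'\to c},q|_{c'\to c})\le B^{c'\to c}$ for all $c'<c$, the case $c'=0$ holding by the assumption on the given $B^{0\to c}$. For the inductive step I would use conditions \textit{(i)} and \textit{(ii)} to represent the true joint law of $S^c$ in the form \eqref{eq:p-alpha-joint}. Condition \textit{(i)}, applied to the full input tuples, guarantees that $p(x^c)$ is supported on $\{x^c_v\}$, so there exist weights $\alpha^\ast_v\ge0$ with $\sum_v\alpha^\ast_v=1$ and $p(x^c)=\sum_v\alpha^\ast_v\delta_{x^c_v}$; condition \textit{(ii)} replaces each $\delta_{y^c_v}$ by the conditional $S^c(\cdot\,|\,x^c_v)$, so that $p_{\alpha^\ast}$ reproduces $p(x^c,y^c)=p(x^c)\,S^c(y^c|x^c)$ exactly. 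Since marginalization onto any sub-tuple commutes with this reweighting, one and the same $\alpha^\ast$ yields $p_{\alpha^\ast}|_{c'\to c}=p|_{c'\to c}$ and $p_{\alpha^\ast}|_{c\to c''}=p|_{c\to c''}$.

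The rest of the step is mechanical. By the induction hypothesis, $D(p_{\alpha^\ast}|_{c'\to c},q|_{c'\to c})=D(p|_{c'\to c},q|_{c'\to c})\le B^{c'\to c}$, so $\alpha^\ast$ satisfies every constraint of \eqref{max-discrepancy-objective-alpha}; hence $B^{c\to c''}\ge D(p_{\alpha^\ast}|_{c\to c''},q|_{c\to c''})=D(p|_{c\to c''},q|_{c\to c''})$, closing the induction and, at the last component, giving $B^y\ge D(p_y,q_y)$. For the final program I would invoke condition \textit{(iii)} to write the true TPI law as $p_y=\sum_v\alpha^\ast_v\delta_{g_v}$ on the grid, whence $p_\text{fail}=\sum_{v:g_v>\tau}\alpha^\ast_v$; because $D(p_{\alpha^\ast},q_y)=D(p_y,q_y)\le B^y$, this $\alpha^\ast$ is feasible in \eqref{max-failure-objective-alpha}, so $F_\text{max}\ge\sum_{v:g_v>\tau}\alpha^\ast_v=p_\text{fail}$, as claimed.

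I expect the only genuine difficulty to lie in formalizing the covering and replacement conditions rather than in the optimization logic. Specifically, one must argue that condition \textit{(i)}, read as covering the \emph{full} input tuples $x^c$ and not merely each sub-tuple separately, makes the true input law exactly (not just approximately) representable by a single weight vector on the finite validation set, and that this single $\alpha^\ast$ simultaneously reproduces all marginals entering \eqref{max-discrepancy-objective-alpha}; the stochastic replacement \textit{(ii)} is precisely what prevents a one-sample output from understating the true output spread. I would also remark that the optional shape restrictions (monotonicity, Lipschitz) attached to \eqref{max-failure-objective-alpha} preserve the bound only when the true $p_y$ itself obeys them, which is why the proposition is stated for the plain program and its relaxations.
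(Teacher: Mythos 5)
Your proposal is correct and follows essentially the same route as the paper's proof: by induction over components, the true distribution $p$ (representable as some $p_{\alpha^\ast}$ under conditions \textit{(i)}--\textit{(ii)}) is a feasible point of each maximization, so the computed maximum dominates the true discrepancy, and condition \textit{(iii)} plays the analogous role for the final program; the observation that relaxations only enlarge the feasible set matches the paper's remark following its proof. Your closing caveat that the optional monotonicity and Lipschitz constraints preserve validity only if the true $p_y$ satisfies them is a correct and slightly sharper point than the paper makes explicit.
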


App.\ \ref{app:proof-proposition} gives a proof as well as an additional limit statement about the $B^{c\to c''}$ and $F_\text{max}$ in the more realistic setting of increasingly dense inputs $x^c_v$ and approximations of $S^c(x^c_v)$.

\subsection{Failure Bound via Surrogate Model}\label{sec:uncertainty-wrapper-method}
As an alternative, more heuristic but simpler, 
baseline method to estimate an upper bound $F_\text{max}$ on the failure probability $p_\text{fail}$, we introduce a sampling-based method that also operates on the available data only. 
The underlying concept in quantifying model accuracy is similar to the one from \citep{jiang2021assessing} and can also be thought of one particular form of error modeling \citep{riedmaier2021unified}. 

The general idea is to train an additional ``surrogate'' model $M'^c$ for each system component $S^c$, and employ the resulting composite $M'$ to estimate how far $M$ deviates from the real $S$ in terms of TPI outputs. 
This is possible because for training $M'^c$ we use the available validation data $(x^c_v,y^c_v)$ measured from $S^c$. 
We take Gaussian processes (GPs) for $M'^c$, but other probabilistic models like normalizing flows or deterministic ones like neural networks are possible as well.
Choosing $M'^c$ from a different model class than $M^c$ will generally lead to a more conservative estimate $F_\text{max}$. 

After training the $M'^c$, we run the resulting composite surrogate model $M'$ on the simulation input samples $x^M_n$ (which make up $q_x$) to obtain TPI output samples $y'^M_{n,i}$ (with $k$ repetitions $i=1,\ldots,k$), in the same way  the given model $M$ can generate outputs $y^M_{n,i}$ from given $x^M_n$. 
By comparing the $y'^M_{n,i}$ to the $y^M_{n,i}$ we obtain a heuristic estimate of the error of $M$ in simulating the actual TPI of system $S$, see \citep{jiang2021assessing}. 
Concretely in this paper, we use the averaged outputs $y^M_n:=(1/k)\sum_i y^M_{n,i}$ and similarly $y'^M_n$, taking as the simulation error $\Delta$ a high quantile (here, 95\%) of the (signed or absolute) deviations:
\begin{align}\label{eq:delta-surr-model}
\Delta=\text{quantile}_{0.95}[\{y'^M_n-y^M_n\}].
\end{align}
The 100\%-quantile $\max_n(y'^M_n-y^M_n)$ would lead to more conservative bounds $F_\text{max}$. For an illustration see App.\ \ref{sec:signal_propagation_illustration}.

Finally, we estimate an upper bound on $p_\text{fail}$ by including a safety margin $\Delta$ before the threshold $\tau$:
\begin{align}\label{eq:Fmax_estimate_UW}
F_\text{max}:=\int_{\tau-\Delta}^{\infty}q(y)dy=\frac{1}{n_M\cdot k}\sum_{n,i}\mathds{1}_{y^M_{n,i}>\tau-\Delta}.
\end{align}
Note that this method \emph{cannot} account for discrepancies between the (unknown) system input distribution $p_x$ w.r.t.\ which we would like to bound the failure probability $p_\text{fail}$, and the given simulation inputs $x^M_n$ that make up $q_x$, see Sec.\ \ref{sec:experiments}. Rather, the method can be expected to work well only when the sample-based $q_x$ is close to the actual $p_x$. Furthermore, if simulation model and surrogate model share unjustified modeling assumptions, an agreement between the two models might mask differences to the actual system.

\section{EXPERIMENTS}\label{sec:experiments}
We evaluate the proposed method on $8$ benchmark systems in Sec.~\ref{sec:reli-benchm-eval}.\footnote{Our organization is carbon neutral. Therefore, all its activities including research activities (e.g., compute clusters) no longer leave a carbon footprint.} 
For each system, we create $4$ configurations where the simulation models and/or the simulation input distributions differ.
These configurations are illustrated on an artificial example in Sec.~\ref{sec:experiment_linear_use-case}. 

\subsection{Illustration: Gaussian Models}\label{sec:experiment_linear_use-case}
As an exemplary validation problem, consider the following one-component, one-dimensional setup: 
For a linear system  $S:\R\rightarrow\R$ with $S(x)=w_S x + b_S$, we want to assess the failure probability in (\ref{eq:failure_prob}) by means of a linear model $M(x) = w_M x + b_M $. 
Both $S$ and $M$ are stimulated with samples from Gaussian distributions $p_x=\mathcal{N}(\mu_p, \sigma_p^2)$ and $q_x=\mathcal{N}(\mu_q, \sigma_q^2)$, respectively.
We can control the accuracy of $M$ and its input distribution $q_x$ separately, thereby investigating their impacts on the estimated failure probability.

Specifically, we analyze the following two configurations: (a) $M$ and $S$ differ in $b_M\neq b_S$, but they  receive identical inputs $q_x=p_x$ (\emph{Misfit Model \& Perfect Input}); and (b) the model $M$ is identical to $S$, but their respective input distributions $q_x\neq p_x$ differ (\emph{Perfect Model \& Biased Input}).
Under these two configurations, \texttt{DPBound} is illustrated for a single propagation step in Fig.~\ref{fig:linear_use_case_illustration}, where the marginal output distributions of $M$ and $S$ differ (top row marginals in brown resp.\ blue).
However, the output discrepancy bound $B^{1\to2}$ from Eq.\ (\ref{max-discrepancy-objective-alpha}) originates differently in the two configurations: In Fig.\ \ref{fig:b}, the input discrepancy $B^{0\to1}$ 
vanishes due to identical inputs $p_x=q_x$.
The output bound (\ref{max-discrepancy-objective-alpha})  thus directly reflects the difference between the output marginals $S(q_x)$ and $M(q_x)$, without optimization since the constraint forces the weights $\alpha$ to be uniform so that $p_\alpha=q_x$ 
(here, we assumed as validation inputs the $q_x$-samples for simplicity). 

\begin{figure}[t]
	\begin{center}
		\subfigure[]{\label{fig:b}\includegraphics[width=0.235\textwidth]{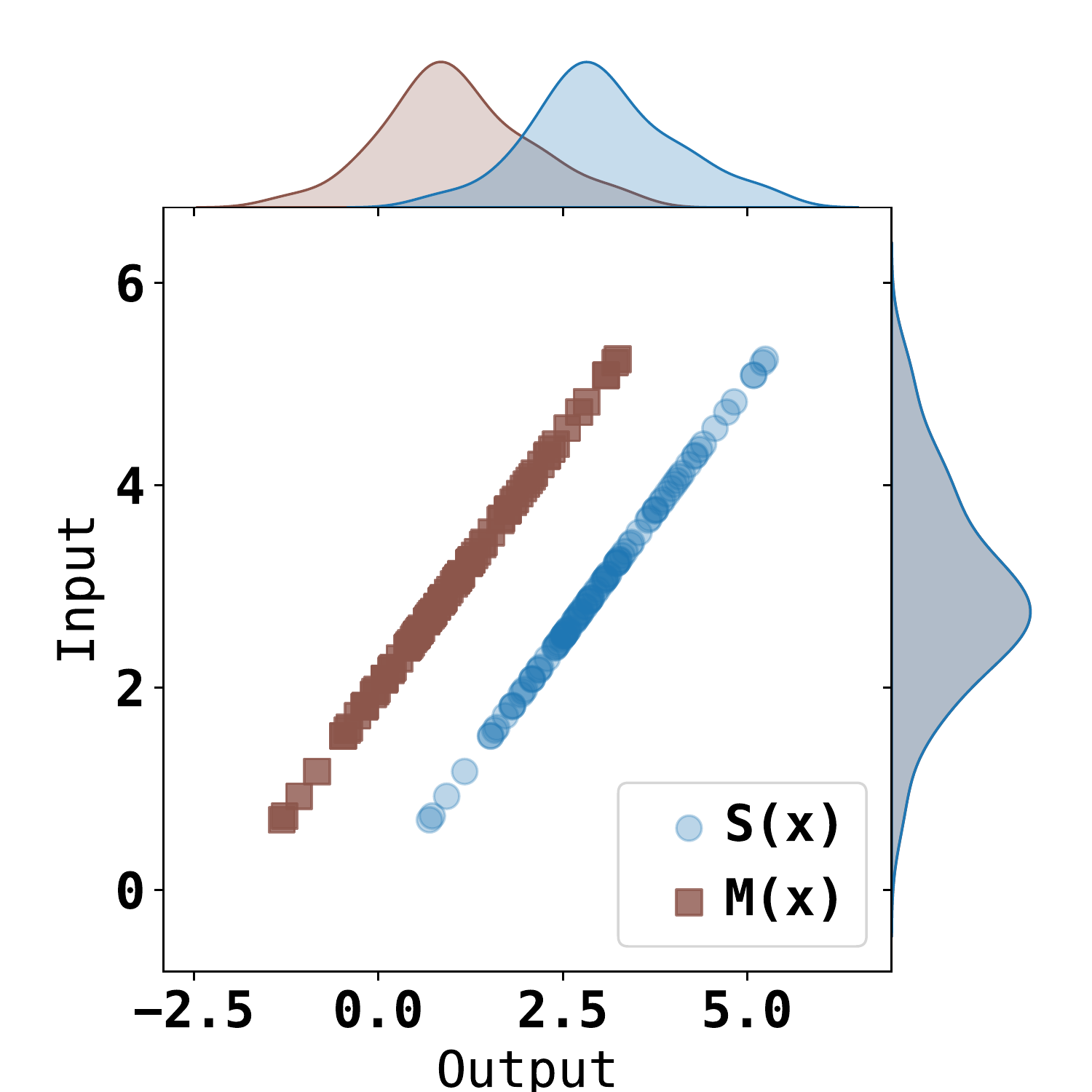}}
		\subfigure[]{\label{fig:a}\includegraphics[width=0.235\textwidth]{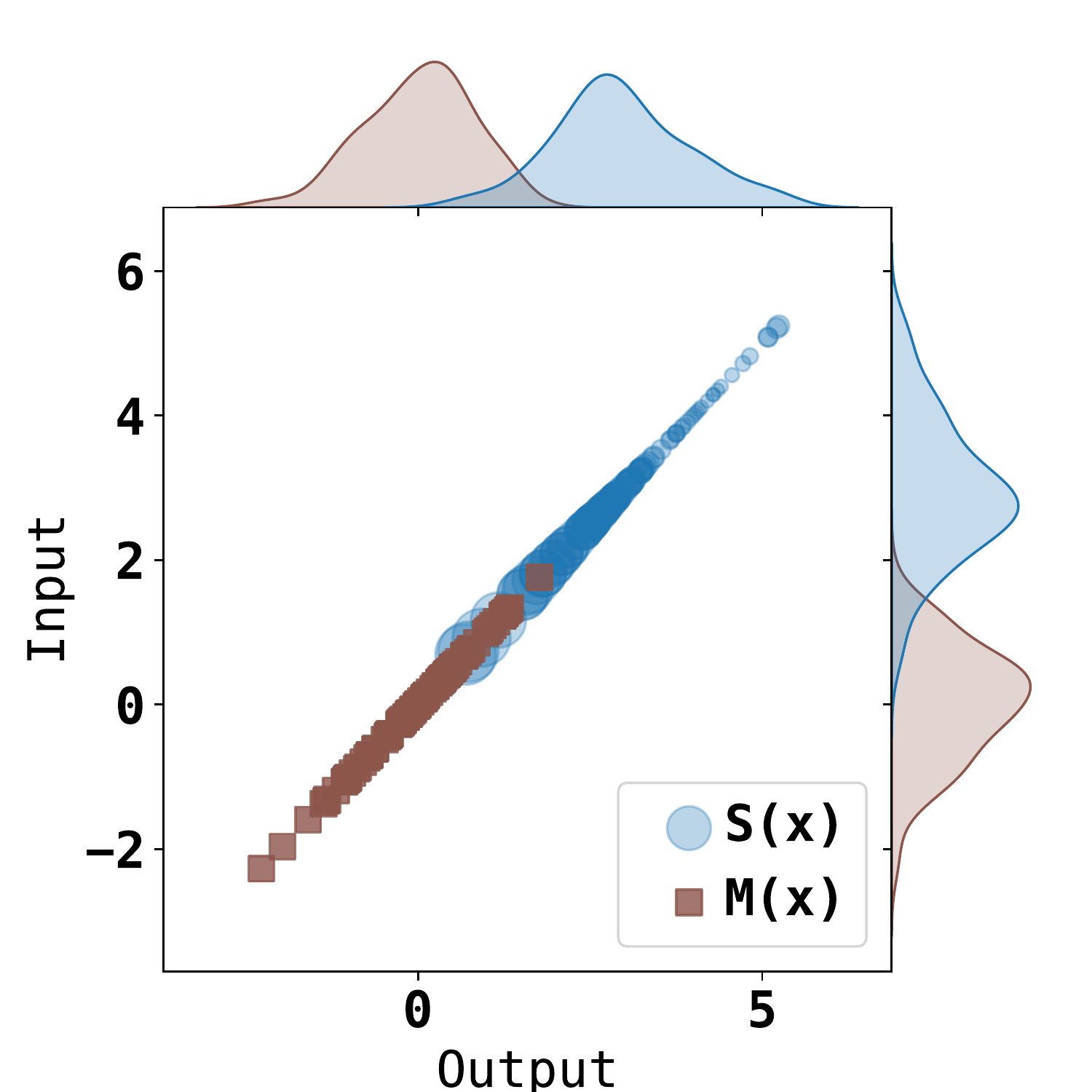}}
	\end{center}
	\caption{Illustration of \texttt{DPBound} for a linear mapping between (samples from) Gaussian signals.
		{\textbf{(a)}} There is model mismatch $M\neq S$, but the input distribution $q_x=p_x$ is perfect.
		{\textbf{(b)}} $M=S$ is a perfect model, but the model input distribution $q_x\neq p_x$ is biased w.r.t.\ the real world.
		The computed weights $\alpha_v$ from Eqs.\ (\ref{eq:p-alpha-joint}),(\ref{max-discrepancy-objective-alpha}) are depicted by the size of the blue $S(x)$-markers ($\alpha$ is uniform in case (a)).\label{fig:linear_use_case_illustration}}
\end{figure}

In Fig.\ \ref{fig:a}, however, the bound on the output discrepancy stems solely 
from the non-zero input discrepancy $B^{0\to1}=MMD(p_x,q_x)>0$, rather than from any difference between $M$ and $S$:
\texttt{DPBound} finds in Eq.\ (\ref{max-discrepancy-objective-alpha}) the worst-case weighted output distribution $p_\alpha$ consistent with the input discrepancy $B^{0\to1}$. 
The output bound is then the nonzero difference between this $p_\alpha$ and $M(q_x)$, even though both $p_\alpha$ and $M(q_x)$ were built with outputs from $S=M$.

In this way, our method \texttt{DPBound} can account for both model misfits and biased inputs. 
The latter is not true of the \texttt{SurrModel} method from Sec.\ \ref{sec:uncertainty-wrapper-method}, as it ignores the real-world distribution $p_x$ and can thus fail for biased inputs $q_x\neq p_x$. For further details and illustrations, see App.\ \ref{appendix:illustrations}.

\subsection{Reliability benchmark evaluation}
\label{sec:reli-benchm-eval}
We now demonstrate the feasibility of our discrepancy propagation method in a reliability benchmark.
\begin{table*}
\caption{Bounds on the failure probability (in \%, with standard deviations from 5 repetitions) delivered by the three compared methods for each benchmark problem under the four simulation configurations \emph{Perfect vs.\ Misfit Model} and \emph{Perfect vs.\ Biased Input}. Each problem has been normalized such that the ground-truth failure probability is 1\%. Also shown (in bold) is the ratio of invalid bounds (i.e.\ bounds below 1\%) delivered by each method among the 40 runs per configuration.\label{tab:results_benchmark}}

\begin{center}
\footnotesize
\resizebox{0.90\textwidth}{!}{ 
\begin{tabular}{l|l|rrr|rrr}
    &            & \multicolumn{3}{c|}{\textbf{Perfect Model}}  & \multicolumn{3}{c}{\textbf{Misfit Model}}\\
           & \cellcolor{verylightgray} Problem &   \cellcolor{verylightgray} DPBound &       \cellcolor{verylightgray} MCCP &  \cellcolor{verylightgray} SurrModel &     \cellcolor{verylightgray} DPBound &       \cellcolor{verylightgray} MCCP &  \cellcolor{verylightgray} SurrModel \\
\midrule
                   & \multicolumn{7}{c}{\cellcolor{veryverylightgray}{Single Component}}  \\
    \multirow{9}{*}{\textbf{\rot{Perfect Input}}} 
          & Borehole &  1.10 $\pm$ 0.2 & 1.87 $\pm$ 0.3 & 3.04 $\pm$ 1.3 &   1.75 $\pm$ 1.7 & 1.06 $\pm$ 0.4 & 3.48 $\pm$ 1.6 \\
           & Branin & 19.98 $\pm$ 3.3 & 2.86 $\pm$ 0.6 & 1.84 $\pm$ 0.5 &  18.87 $\pm$ 2.8 & 2.76 $\pm$ 0.8 & 1.80 $\pm$ 0.7 \\
       & Four Branch &  23.1 $\pm$ 1.0 & 2.08 $\pm$ 0.4 & 1.48 $\pm$ 0.6 &  22.07 $\pm$ 0.7 & 1.07 $\pm$ 0.2 & 0.84 $\pm$ 0.6 \\
                   & \multicolumn{7}{c}{\cellcolor{veryverylightgray}{Multiple Components}}  \\
     & Chained Solvers & 14.92 $\pm$ 2.7 & 2.08 $\pm$ 0.4 & 1.28 $\pm$ 0.6 &  15.21 $\pm$ 2.2 & 2.18 $\pm$ 0.6 & 1.60 $\pm$ 0.4 \\
        & Borehole &  8.94 $\pm$ 6.1 & 2.03 $\pm$ 0.5 & 3.12 $\pm$ 1.9 &   7.51 $\pm$ 2.6 & 2.06 $\pm$ 0.9 & 3.52 $\pm$ 1.6 \\
       & Branin & 17.91 $\pm$ 6.8 & 2.71 $\pm$ 0.5 & 1.56 $\pm$ 0.4 &   16.6 $\pm$ 6.3 & 2.82 $\pm$ 0.2 & 1.88 $\pm$ 0.4 \\
   & Four Branch &  36.4 $\pm$ 3.1 & 2.18 $\pm$ 0.8 & 1.52 $\pm$ 0.7 &  35.82 $\pm$ 3.2 & 0.81 $\pm$ 0.2 & 0.56 $\pm$ 0.3 \\
& Controlled Solvers & 10.39 $\pm$ 4.6 & 1.97 $\pm$ 0.7 & 0.92 $\pm$ 0.5 &  11.01 $\pm$ 4.3 & 1.91 $\pm$ 0.7 & 0.88 $\pm$ 0.5 \\

\hline
    &    \textbf{\% Invalid Bounds} & \textbf{5.0} & \textbf{0.0} & \textbf{17.5} & \textbf{0.0} & \textbf{22.5} & \textbf{27.5}\\
\midrule
    \multirow{9}{*}{\textbf{\rot{Biased Input}}} 
    & \multicolumn{7}{c}{\cellcolor{veryverylightgray}{Single Component}}  \\
   &       Borehole & 16.57 $\pm$ 8.3 & 0.80 $\pm$ 0.3 & 0.88 $\pm$ 0.8 & 15.19 $\pm$ 10.5 & 0.60 $\pm$ 0.0 & 0.76 $\pm$ 0.8 \\
   &         Branin &  92.7 $\pm$ 3.3 & 0.73 $\pm$ 0.3 & 0.08 $\pm$ 0.2 &  93.35 $\pm$ 2.9 & 2.01 $\pm$ 0.8 & 1.00 $\pm$ 0.6 \\
   &    Four Branch & 22.69 $\pm$ 0.9 & 1.98 $\pm$ 0.5 & 1.28 $\pm$ 0.5 &  21.96 $\pm$ 1.0 & 0.88 $\pm$ 0.2 & 0.60 $\pm$ 0.3 \\
                   & \multicolumn{7}{c}{\cellcolor{veryverylightgray}{Multiple Components}}  \\
     & Chained Solvers & 26.39 $\pm$ 1.7 & 0.74 $\pm$ 0.2 & 0.08 $\pm$ 0.1 &   26.7 $\pm$ 1.7 & 1.39 $\pm$ 0.7 & 0.56 $\pm$ 0.5 \\
    & Borehole & 20.96 $\pm$ 9.0 & 0.93 $\pm$ 0.4 & 0.76 $\pm$ 0.8 &  15.89 $\pm$ 7.5 & 0.60 $\pm$ 0.0 & 0.44 $\pm$ 0.3 \\
     &   Branin & 89.52 $\pm$ 6.1 & 0.81 $\pm$ 0.2 & 0.12 $\pm$ 0.1 &  89.52 $\pm$ 6.1 & 0.91 $\pm$ 0.5 & 0.24 $\pm$ 0.3 \\
   & Four Branch & 35.83 $\pm$ 3.2 & 1.96 $\pm$ 0.9 & 1.28 $\pm$ 0.7 &  35.42 $\pm$ 2.9 & 0.74 $\pm$ 0.2 & 0.44 $\pm$ 0.2 \\
& Controlled Solvers & 13.36 $\pm$ 4.1 & 0.60 $\pm$ 0.0 & 0.00 $\pm$ 0.0 &  13.51 $\pm$ 5.9 & 0.67 $\pm$ 0.2 & 0.04 $\pm$ 0.1 \\
\hline
    &    \textbf{\% Invalid Bounds} & \textbf{0.0} & \textbf{67.5} & \textbf{70.0} & \textbf{0.0} & \textbf{67.5} & \textbf{75.0} \\
    \bottomrule
\end{tabular}
}
\end{center}
\end{table*}

\paragraph{Compared benchmark systems.}
The performance of \texttt{DPBound} is evaluated on 3 single-component and 5 multi-component problems from the reliability and uncertainty propagation literature.
These problems are briefly summarized in the following table, see App.\ \ref{sec:deta-exper-sect} for more details:
\begin{center}
\scriptsize
\begin{tabular}{ |l|c|c| } 
\hline
Benchmark System & Input Dim. & Components \\
\hline
    {\bf{Controlled Solvers}}~\citep{sanson2019systems} & 16 & 4 \\
    {\bf{Chained Solvers}}~\citep{sanson2019systems} & 1 & 2 \\
    {\bf{Borehole}}~\citep{sim_bench_website} & 8 & 1 / 5 \\
    {\bf{Branin}}~\citep{sim_bench_website} & 2 & 1 / 3 \\
    {\bf{Four Branch}}~\citep{UQworld} & 2 & 1 / 4 \\
\hline
\end{tabular}
\end{center}

For the evaluation, we set the threshold $\tau$ on the scalar output for each of those problems such that the ground-truth failure probability $\textrm{Pr}_{x\sim p_x}[S(x)>\tau]=1\%$ (see Eq.\ (\ref{eq:failure_prob})).

We evaluate each system under four different simulation configurations (cf.\ Sec.\ \ref{sec:experiment_linear_use-case}): As simulation models, we take either \emph{Perfect Models} $M^c=S^c$ or GP-based \emph{Misfit Models} $M^c\neq S^c$; as simulation input distribution, we take either \emph{Perfect Input} $q_x=p_x$ or \emph{Biased Input} $q_x\neq p_x$ (App.\ \ref{sec:deta-exper-sect}).

\textbf{Compared virtual validation methods.} We compare our failure probability bound with two alternative methods:\\
{\bf{\texttt{DPBound} \textbf{(ours):}}}
Failure bound $F_\text{max}$ calculated by propagating MMD-based bounds according to Algorithm \ref{algorithm:DPBound}.\\ 
{\bf{\texttt{MCCP}}\textbf{:}}
95\%-confidence Clopper-Pearson bound on the failure probability, calculated on binary Monte-Carlo samples obtained by thresholding the output of the simulation model. \\ 
{\bf{\texttt{SurrModel}}\textbf{:}}
Bound from Eq.\ (\ref{eq:Fmax_estimate_UW}), obtained by accounting for the difference between the simulation and a GP-based surrogate model learned on the validation data (Sec.\ \ref{sec:uncertainty-wrapper-method}).

\paragraph{Experimental results.} The obtained results are summarized in Tab.\ \ref{tab:results_benchmark}. 
We first focus on the validity of the methods (bold numbers in Tab.\ \ref{tab:results_benchmark}): 
In this regard one can see that, in the ``Perfect Input--Perfect Model'' setting the methods produce generally valid bounds, with at most 17.5\% invalidness for \texttt{SurrModel}. 
In the more challenging and realistic settings with misfit and/or input bias, however, the invalidness ratios for \texttt{MCCP} and \texttt{SurrModel} increase beyond acceptable levels, especially for \emph{Biased Input} with invalidness reaching up to 75\%. 
\texttt{DPBound} on the other hand remains perfectly valid under both \emph{Misfit Model} and \emph{Biased Input}, in line with the illustration in Sec.\ \ref{sec:experiment_linear_use-case}.

To understand why \texttt{MCCP} and \texttt{SurrModel} have challenges with the \emph{Biased Input} setting, notice that these methods disregard the actual system inputs $p_x$, instead relying solely on the simulation inputs $q_x$ 
without any means of dealing with a potential discrepancy between both distributions.
When the simulation input distribution is biased towards significantly lower simulated TPI values, the delivered bounds can then be invalid (for an illustration see App.\ \ref{sec:signal_propagation_illustration}). 
\texttt{DPBound} on the other hand natively accounts for this input discrepancy through the initial bounds $B^{0\to c}$, which in our experiments are estimated via samples from $p_x$, $q_x$ (see end of Sec.\ \ref{sec:setup}). 
In addition to this shortcoming of ignoring input discrepancies, \texttt{MCCP} remains unaware of any potential \emph{Misfit Model}, as it completely ignores the system $S$ and its validation data. 
        This explains the jump in invalidness from 0.0\% to 22.5\% when isolating this effect in the \emph{Perfect Input} setting. 
Note, while \texttt{MCCP} is not expected to produce upper bounds in 100\% of cases due to its 95\%-confidence specification, it stays significantly below the 95\% promise (App.\ \ref{app:MCCP99}). 

Although \texttt{DPBound} provides valid upper bounds $F_\text{max}$ in almost all cases, it can sometimes still underestimate, as happened in two validation runs for Borehole(Single) under \emph{Perfect Input--Perfect Model} with bounds close to the 1\% ground-truth.  
To explain how this can happen, note that only under dense sampling conditions is \texttt{DPBound} expected to be perfectly valid (Prop. \ref{prop:convergence} and App.\ \ref{app:proof-proposition}). 
In any case, \texttt{DPBound} shows high validity overall, while the two competing methods' likelihood for falsely positive validation is certainly too high for trustable statements.

Summarizing these validity results, of the three compared methods, only \texttt{DPBound} should be further considered to be viable at all as a reliable validation method.

Despite its high validity, \texttt{DPBound} delivers bounds $F_\text{max}$ that are mostly far from trivial (i.e.\ much below 100\%), also in the challenging misfit and/or biased settings. 
These bounds in conjunction with their high validity thus yield useful information, which can serve as a basis for
extended validation approaches (see Conclusion). 
That \texttt{MCCP} and \texttt{SurrModel} often produce much smaller or ``tighter''\footnote{Note, the ``tightness'' of the bounds can be read off from Tab.~\ref{tab:results_benchmark} by subtracting from each bound the ground-truth value of 1\%.} bounds is clearly no advantage per se, as these are often invalid and thus misleading in validation (see above). 
We did not focus on the thightness evaluation because our main evaluation criterion was the rate of falsely positive validations, which already excluded both competing methods in safety-relevant situations.

While it remains for future work to combine \texttt{DPBound}'s non-trivial and reliable bounds with other validation approaches, our investigation here constitutes the first one into the validity of validation methods for the component-wise setting, establishing \texttt{DPBound} as a viable candidate.

\section{CONCLUSION}\label{sec:conclusion}
Validating complex composite systems is a notoriously difficult 
task, e.g.\ validating the performance of autonomously driving vehicles.
Instead of expensively testing the system in the real world, simulations can reduce the validation effort \citep{wong2020testing}.
However, it is hard to assess the effect of simulation model inaccuracies or 
input data shifts on the validation target, especially for composite systems.
We have developed a method to estimate an upper bound on the system failure probability, as underestimation of failure rates is typically much more costly than overestimation. 
Our method assumes that a simulation model as well as measurement data for each subsystem are available. 
Our evaluations show that 
the obtained bounds are useful and valid in general, with theoretical guarantees in the large-data limit (Prop.\ \ref{prop:convergence}). 

Due to its individual-component nature, our method is especially fit to use when only one component in an already deployed system changes, e.g.\ a sensor or the software controller in an autonomous driving system.
Although the values computed for the bounds by our propagation method are larger than what would typically be required for safety-relevant applications, they still yield useful information, for example by acting as a safe-guard before entering an expensive real-world testing phase. Continuing the proposed avenue of research may ultimately spawn validation with immensely reduced number of real-world test runs.
For the future, it remains to explore the method in higher-dimensional situations, possibly by extending our parameterization of the joint distribution $p_\alpha$.
While the presented method was derived for static signals, it can be extended to dynamic systems and models by replacing the static signals with embeddings of time-series signals \citep{morrill2020generalised}.
Exploring the proposed method in these regimes, which often include feedback loops, is subject to future research.

\section*{ACKNOWLEDGEMENTS}
This research was supported by the German Federal Ministry for Economic Affairs and Climate Action under the joint project ``KI-Embedded'' (grant no.\ 19I21043A).

\bibliography{references}

\newpage
\onecolumn 
\appendix

\begin{center}
	{\large Supplementary Material for the paper}
	
	{\LARGE Validation of Composite Systems}
	
	{\LARGE by Discrepancy Propagation}
\end{center}

\bigskip

\section{SEMIDEFINITE RELAXATION OF THE BOUND OPTIMIZATION IN EQ.\ (\ref{max-discrepancy-objective-alpha})}\label{app:semidefinite-relaxation}

Let the discrepancy measures in Eq.\ (\ref{max-discrepancy-objective-alpha}) in Sec.\ \ref{subsec:validation-method} be given by MMD (maximum mean discrepancy) with kernels $k^{c\rightarrow c''}$ and $k^{c'\rightarrow c}$, respectively. 
When representing the distributions via samples, we have (see \citep{gretton2012kernel}):
\newcommand{\kout}[2]{\ensuremath{k^{c\rightarrow c''}\left(#1,#2\right) }}
\newcommand{\kin}[2]{\ensuremath{k^{c'\rightarrow c}\left(#1,#2\right) }}
\newcommand{\Kout}{\ensuremath{{\bf{K}}_{c\rightarrow c''}}}
\newcommand{\Kin}{\ensuremath{{\bf{K}}_{c'\rightarrow c}}}
\newcommand{\en}[1]{\ensuremath{{\bf{e}}_{#1}}}
\begin{align}
D(p_\alpha|_{c\rightarrow c''}, q|_{c\rightarrow c''})^2 & = \alpha ^\top \Kout^{VV}\alpha  - 2\alpha^\top \Kout^{VM}\frac{\en{n_M}}{n_M} + \frac{\en{n_M}^\top}{n_M} \Kout^{MM}\frac{\en{n_M}}{n_M}
\end{align}
with kernel matrices (with indices $v,v'=1,\ldots,V^c$, $n,n'=1,\ldots,n_M$)
\begin{align}
\left(\Kout^{VV}\right)_{v,v'} & := \kout{y_v^c}{ y_{v'}^c},\nonumber\\
\left(\Kout^{VM}\right)_{v,n'} & := \kout{y_v^c}{ y_{n'}^{M^c}},\nonumber\\
\left(\Kout^{MM}\right)_{n,n'} & := \kout{y_{n}^{M^c}}{ y_{n'}^{M^c}},\nonumber
\end{align}
and where $\en{d}:=(1,\dots,1)^\top\in{\mathbb R}^d$ denotes the $d$-dimensional all-1's vector, so that $\frac{\en{d}}{d}$ is the uniform probability vector on $d$ elements. Similarly, at the input of component $c$ we have
\begin{align}
D(p_\alpha|_{c'\rightarrow c}, q|_{c'\rightarrow c})^2 & = \alpha ^\top \Kin^{VV}\alpha  - 2\alpha^\top \Kin^{VM}\frac{\en{n_M}}{n_M} + \frac{\en{n_M}^\top}{n_M} \Kin^{MM}\frac{\en{n_M}}{n_M},
\end{align}
where by a slight abuse of notation we define the input kernel matrices as
\begin{align}
\left(\Kin^{VV}\right)_{v,v'} & := \kin{x_v^c}{ x_{v'}^c},\nonumber\\
\left(\Kin^{VM}\right)_{v,n'} & := \kin{x_v^c}{ x_{n'}^{M^c}},\nonumber\\
\left(\Kin^{MM}\right)_{n,n'} & := \kin{x_{n}^{M^c}}{ x_{n'}^{M^c}}.\nonumber
\end{align}

Taken together, we can write (\ref{max-discrepancy-objective-alpha}) in Sec.\ \ref{subsec:validation-method} -- or rather its square -- as the following quadratic optimization problem:
\begin{align}
&(B^{c\rightarrow c''})^2\\
&=\sup_{\{p:D(p_\alpha|_{c'\rightarrow c},q|_{c'\rightarrow c})^2\leq(B^{c'\rightarrow c})^2~~ \forall c'< c\}}D(p_\alpha|_{c\rightarrow c''}, q|_{c\rightarrow c''})^2\\
&=\text{maximize}_\alpha ~\alpha^\top \Kout^{VV}\alpha -2\alpha^\top \Kout^{VM}\frac{\en{n_M}}{n_M}+\frac{\en{n_M}^\top}{n_M} \Kout^{MM}\frac{\en{n_M}}{n_M} \label{eq-no-triang:app-first-term-objective}\\
& \qquad\text{subject to }~\alpha^\top \Kin^{VV}\alpha-2\alpha^\top \Kin^{VM}\frac{\en{n_M}}{n_M}+\frac{\en{n_M}^\top}{n_M} \Kin^{MM}\frac{\en{n_M}}{n_M}\leq(B^{c'\rightarrow c})^2 \quad\forall c'<c\label{eq-no-traing:app-first-term-first-constraint},\\
&\qquad\qquad\qquad~~\en{V^c}^\top\alpha=1,\label{eq-no-triang:equality-constraint}\\
&\qquad\qquad\qquad~~\alpha\geq0,\label{eq-no-traing:app-first-term-last-constraint}
\end{align}
where the vector constraint $\alpha\geq0$ is understood entry-wise.

Unfortunately, the optimization problem (\ref{eq-no-triang:app-first-term-objective}--\ref{eq-no-traing:app-first-term-last-constraint}) is \emph{not} a convex optimization problem because the objective is to maximize a convex function. While heuristic solvers are available, such as the package \texttt{qcqp} \citep{park2017general}, we avoid those heuristic methods, as they do not guarantee a valid upper bound and can be inefficient in computation.

Instead we follow approaches more tailored to the problem, and relax the original problem (\ref{eq-no-triang:app-first-term-objective}--\ref{eq-no-traing:app-first-term-last-constraint}) to obtain an efficiently solvable semidefinite program (SDP), a type of convex optimization problem. 
We follow the ``tightened semidefinite relaxations" from \citet{park2017general}[Secs.\ 3.3, 3.4].

For this, we introduce the (symmetric) matrix variable $A:=\alpha\alpha^\top$ and rewrite the quadratic terms in (\ref{eq-no-triang:app-first-term-objective}) and (\ref{eq-no-traing:app-first-term-first-constraint}) via the matrix traces $\alpha^\top \Kout^{VV}\alpha=\mathrm{Tr}[\Kout^{VV}A]$ and $\alpha^\top \Kin^{VV}\alpha=\mathrm{Tr}[\Kin^{VV}A]$, so that they are now \emph{linear} in the variable $A$. 
Due to (\ref{eq-no-traing:app-first-term-last-constraint}), $A=\alpha\alpha^\top$ is entry-wise nonnegative, which we write as $A\succcurlyeq0$; furthermore, it holds that $\en{V_c}^\top A\en{V_c}=1$ due to (\ref{eq-no-triang:equality-constraint}). 
Also due to (\ref{eq-no-triang:equality-constraint}), we can even recover $\alpha$ from $A=\alpha\alpha^\top$ via $\alpha=A\en{V_c}$. 
Therefore, we will omit our original variable $\alpha$ in favor of the matrix variable $A$ and simply \emph{define} the expression $\alpha:=A\en{V_c}$. 
Rewriting (\ref{eq-no-triang:app-first-term-objective}--\ref{eq-no-traing:app-first-term-last-constraint}) in terms of $A$ and additionally adding the constraints $A=\alpha\alpha^\top$, $\en{V_c}^\top A\en{V_c}=1$ and $A\succcurlyeq0$ gives the same optimum as (\ref{eq-no-triang:app-first-term-objective}--\ref{eq-no-traing:app-first-term-last-constraint}). 
The resulting (rewritten) problem is convex except for the equality constraint $A=\alpha\alpha^\top$. 
We finally relax this constraint to the matrix inequality $A\geq\alpha\alpha^\top$ (where the inequality is understood w.r.t.\ the positive semidefinite order), which \emph{is} convex. 
Writing this matrix inequality in the manifestly convex way (\ref{eq:tightened-SDR-relaxation-A-alpha}) as a semidefinite constraint \citep{park2017general}, we therefore obtain:
\begin{lemma}[Tightened SDP relaxation of (\ref{eq-no-triang:app-first-term-objective}--\ref{eq-no-traing:app-first-term-last-constraint})]\label{lem:SDR-relaxation}
	Define the tightened SDP relaxation of (\ref{eq-no-triang:app-first-term-objective}--\ref{eq-no-traing:app-first-term-last-constraint}) as the following semidefinite optimization problem (SDP) with symmetric matrix variable $A=A^\top\in{\mathbb R}^{V^c\times V^c}$ and the abbreviation $\alpha:=A\en{V_c}$: 
	\begin{align}
	(B^{c\rightarrow c''}_\text{SDR-tightened})^2\quad:=\\
	~\text{maximize}_A~~~~&\mathrm{Tr}[\Kout^{VV}A]-2\alpha^\top \Kout^{VM}\frac{\en{n_M}}{n_M}+ \frac{\en{n_M}^\top}{n_M} \Kout^{MM}\frac{\en{n_M}}{n_M}\label{eq:tightened-SDR-objective}\\
	\text{subject to}~~~~&\mathrm{Tr}[\Kin^{VV}A]-2\alpha^\top \Kin^{V}\frac{\en{n_M}}{n_M}+\frac{\en{n_M}^\top}{n_M} \Kin^{MM}\frac{\en{n_M}}{n_M}\leq (B^{c'\rightarrow c})^2 ~~~\forall c'<c, \label{eq:tightened-SDR-quadratic-constraint}\\
	&\left(\begin{array}{cc}A&\alpha\\\alpha^\top&1\end{array}\right)\geq0~~\text{(i.e.\ the left-hand-side is a positive semidefinite matrix)},\label{eq:tightened-SDR-relaxation-A-alpha}\\
	&\en{V^c}^\top A\en{V^c}=1,\label{eq:tightened-SDR-matrix-equality-constraint}\\
	&A\succcurlyeq0~~\text{(entry-wise; above the diagonal suffices due to constraint (\ref{eq:tightened-SDR-relaxation-A-alpha}) and $A=A^\top$)}.\label{eq:tightened-SDR-last-tightening-constraint}
	\end{align}
	Then its optimal value satisfies $(B^{c\rightarrow c''}_\text{SDR-tightened})^2\geq(B^{c\rightarrow c''})^2$, i.e.\ $(B^{c\rightarrow c''}_\text{SDR-tightened})^2$ is an upper bound on the optimum of the non-convex problem (\ref{eq-no-triang:app-first-term-objective}--\ref{eq-no-traing:app-first-term-last-constraint}), which itself is (the square of) an upper bound on the (unknown) MMD discrepancy $D(p|_{c\to c''},q|_{c\to c''})$ (cf.\ Eq.\ (\ref{max-discrepancy-objective-alpha}) in Sec.\ \ref{subsec:validation-method}).
\end{lemma}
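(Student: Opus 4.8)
The plan is to treat (\ref{eq-no-triang:app-first-term-objective}--\ref{eq-no-traing:app-first-term-last-constraint}) as a quadratically constrained quadratic program (QCQP) and to prove the claim by the standard lifting-and-relaxation argument: I will show that every point feasible for the QCQP lifts to a point feasible for the SDP with the \emph{same} objective value. Since the SDP then maximizes the same (now linear) objective over a feasible set that contains all these lifts, its optimum can only be at least as large, which is exactly $(B^{c\rightarrow c''}_\text{SDR-tightened})^2\geq(B^{c\rightarrow c''})^2$. First I would introduce the lifted variable $A:=\alpha\alpha^\top$ and record the exact algebraic identities $\alpha^\top\Kout^{VV}\alpha=\mathrm{Tr}[\Kout^{VV}A]$ and $\alpha^\top\Kin^{VV}\alpha=\mathrm{Tr}[\Kin^{VV}A]$, which turn the quadratic objective (\ref{eq-no-triang:app-first-term-objective}) and the quadratic constraints (\ref{eq-no-traing:app-first-term-first-constraint}) into expressions that are \emph{linear} in $A$, while the remaining linear-in-$\alpha$ and constant terms are left unchanged.

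The key step is the feasibility check. Given any $\alpha$ feasible for (\ref{eq-no-triang:app-first-term-objective}--\ref{eq-no-traing:app-first-term-last-constraint}), I set $A:=\alpha\alpha^\top$ and verify each SDP constraint in turn. The block constraint (\ref{eq:tightened-SDR-relaxation-A-alpha}) holds with equality: by the Schur-complement criterion, since the corner entry is $1>0$, positive semidefiniteness of the $2\times2$ block is equivalent to $A-\alpha\alpha^\top$ being positive semidefinite, which is saturated at $A=\alpha\alpha^\top$. The abbreviation $\alpha:=A\en{V^c}$ recovers the original vector, because $A\en{V^c}=\alpha(\alpha^\top\en{V^c})=\alpha$ using $\en{V^c}^\top\alpha=1$ from (\ref{eq-no-triang:equality-constraint}); hence (\ref{eq:tightened-SDR-matrix-equality-constraint}) reads $\en{V^c}^\top A\en{V^c}=(\en{V^c}^\top\alpha)^2=1$, the entry-wise nonnegativity (\ref{eq:tightened-SDR-last-tightening-constraint}) follows from $\alpha\geq0$, and the trace rewriting makes the input constraints (\ref{eq:tightened-SDR-quadratic-constraint}) coincide with (\ref{eq-no-traing:app-first-term-first-constraint}). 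The objectives agree since $\mathrm{Tr}[\Kout^{VV}A]=\alpha^\top\Kout^{VV}\alpha$ and the linear and constant terms are identical under $\alpha=A\en{V^c}$.

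It then remains to observe that passing from the equality $A=\alpha\alpha^\top$ to the semidefinite inequality encoded by (\ref{eq:tightened-SDR-relaxation-A-alpha}) is a genuine relaxation: it only \emph{enlarges} the feasible set, so the supremum cannot decrease, yielding the claimed inequality. For consistency of the abbreviation across this enlarged set, I would note that $\alpha:=A\en{V^c}$ remains a probability vector throughout, because $\en{V^c}^\top\alpha=\en{V^c}^\top A\en{V^c}=1$ by (\ref{eq:tightened-SDR-matrix-equality-constraint}) and $\alpha=A\en{V^c}\geq0$ by (\ref{eq:tightened-SDR-last-tightening-constraint}). Finally, the second half of the statement -- that $(B^{c\rightarrow c''})^2$ upper-bounds the squared true discrepancy $D(p|_{c\to c''},q|_{c\to c''})^2$ -- is inherited directly from the construction of (\ref{max-discrepancy-objective-alpha}) and is not affected by the SDP step, its exactness resting on the coverage assumptions of Prop.\ \ref{prop:convergence}.

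The argument is mostly routine bookkeeping; the only genuinely delicate points -- rather than true obstacles -- are the Schur-complement equivalence used to express $A\succeq\alpha\alpha^\top$ as the manifestly convex block constraint, and checking that the abbreviation $\alpha:=A\en{V^c}$ behaves consistently both at the lifted QCQP points and across the enlarged feasible region. I emphasize that no \emph{tightness} of the relaxation is asserted here (that is examined empirically in App.\ \ref{sec:empirical-tightness}); only the upper-bound direction is established, which is all that is required for $F_\text{max}$ to be a valid bound.
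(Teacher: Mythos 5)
Your proposal is correct and follows essentially the same route as the paper: the standard lifting $A:=\alpha\alpha^\top$ turning the quadratic terms into traces, the observation that the added constraints (\ref{eq:tightened-SDR-relaxation-A-alpha})--(\ref{eq:tightened-SDR-last-tightening-constraint}) are all satisfied by lifted feasible points with unchanged objective value, and the relaxation of $A=\alpha\alpha^\top$ to $A\geq\alpha\alpha^\top$ via the Schur-complement block, which only enlarges the feasible set and hence cannot decrease the maximum. Your extra check that $\alpha:=A\en{V^c}$ remains a probability vector on the enlarged feasible set is a welcome detail the paper states more briefly.
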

To solve the relaxed semidefinite programs from Lemma \ref{lem:SDR-relaxation}, we use the library \texttt{CVXPY} \citep{diamond2016cvxpy}. 
Note that due to the relaxation, the value $(B^{c\rightarrow c''}_\text{SDR-tightened})^2$ of the relaxed optimization from Lemma \ref{lem:SDR-relaxation} is in general different (larger, i.e.\ worse) than the $(B^{c\rightarrow c''})^2$ from the original optimization (\ref{eq-no-triang:app-first-term-objective}); 
this can happen if the found optimum $\widehat{A}$ of the relaxed problem in Lemma\ \ref{lem:SDR-relaxation} cannot be expressed as $\widehat{A}=\widehat{\alpha}\widehat{\alpha}^\top$ (i.e.\ the optimal $\widehat{A}$ is not of rank 1). 
To detect such a \emph{relaxation gap}, one can plug the found vector $\widehat{\alpha} :=\widehat{A}\en{V_c}$ into (\ref{eq-no-triang:app-first-term-objective}) and compare its value $\text{Opt}_{\text{orig}}(\widehat{\alpha})$ to the optimal value $\text{Opt}_{\text{relax}}(\widehat{A})=(B^{c\rightarrow c''}_\text{SDR-tightened})^2$ of (\ref{eq:tightened-SDR-objective}). 
Then, by Lemma \ref{lem:SDR-relaxation} and the maximization (\ref{eq-no-triang:app-first-term-objective}), it holds that the true optimum $(B^{c\rightarrow c''})^2$ of the nonconvex problem (\ref{eq-no-triang:app-first-term-objective}) is sandwiched between two convexly computable quantities:
\begin{align}
\text{Opt}_{\text{orig}}(\widehat{\alpha})~\leq~(B^{c\rightarrow c''})^2~\leq~\text{Opt}_{\text{relax}}(\widehat{A})=(B^{c\rightarrow c''}_\text{SDR-tightened})^2.
\end{align}
When both values agree, $\text{Opt}_{\text{orig}}(\widehat{\alpha})=\text{Opt}_{\text{relax}}(\widehat{A})$, then the relaxation was \emph{tight}, i.e.\ we have certified that $(B^{c\rightarrow c''})^2=(B^{c\rightarrow c''}_\text{SDR-tightened})^2=\text{Opt}_{\text{relax}}(\widehat{A})$ are equal and our relaxation has found the true optimum. 
More generally, we can guarantee a \emph{relaxation gap} $\Delta=(B^{c\rightarrow c''}_\text{SDR-tightened})^2-(B^{c\rightarrow c''})^2$ of at most $\widehat{\Delta}=\text{Opt}_{\text{relax}}(\widehat{A})-\text{Opt}_{\text{orig}}(\widehat{\alpha})$, and an approximation ratio $\gamma=(B^{c\rightarrow c''})^2/(B^{c\rightarrow c''}_\text{SDR-tightened})^2\in[0,1]$ of at least $\widehat{\gamma}=\text{Opt}_{\text{orig}}(\widehat{\alpha})/\text{Opt}_{\text{relax}}(\widehat{A})\in[0,1]$.

Empirically we find in this way that the relaxation in Lemma \ref{lem:SDR-relaxation} is basically tight in most problem instances, i.e.\ it returns (almost) the correct optimum of (\ref{eq-no-triang:app-first-term-objective}--\ref{eq-no-traing:app-first-term-last-constraint}). See App.\ \ref{sec:empirical-tightness} for an empirical evaluation.

With this relaxation, the number of optimization variables increases from $\text{dim}(\alpha)=V^c$ in (\ref{eq-no-triang:app-first-term-objective}--\ref{eq-no-traing:app-first-term-last-constraint}) to $\text{dim}(A=A^\top)=V^c(V^c+1)/2\sim(V^c)^2/2$ in (\ref{eq:tightened-SDR-objective}--\ref{eq:tightened-SDR-last-tightening-constraint}), i.e.\ it grows quadratically with the number of validation points $V^c$ for component $c$; the number of optimization constraints also increases like $\sim C+(V^c)^2$. This results in a computational limitation which restricts the number of validation inputs to roughly $V^c\lesssim10^3$ with standard convex solvers on standard computing hardware. 
If one would like to apply our general validation method with more validation points $V^c$, one would have to to find another way to efficiently upper-bound the optimization problem (\ref{eq-no-triang:app-first-term-objective}--\ref{eq-no-traing:app-first-term-last-constraint}), instead of our tightened SDP relaxation (Lemma \ref{lem:SDR-relaxation}).

To run the whole validation method in Algorithm \ref{algorithm:DPBound}, the number of bound optimizations (\ref{eq-no-triang:app-first-term-objective}--\ref{eq-no-traing:app-first-term-last-constraint}) or (\ref{eq:tightened-SDR-objective}--\ref{eq:tightened-SDR-last-tightening-constraint}) to perform equals the number of connections $c\to c''$ between components of the system (where $1\leq c<c''\leq C+1$, see Sec.\ \ref{sec:setup}), i.e.\ the number of such bound computation lies between $C$ (for the linear chain) and $C(C+1)/2$ (for the ``fully connected'' system). See App.\ \ref{sec:runtime} for actual runtimes of our method.

\section{DETAILS ON THE FAILURE PROBABILITY OPTIMIZATION IN EQ.\ (\ref{max-failure-objective-alpha})}\label{app:violation-optimization}
Similar to (\ref{eq-no-triang:app-first-term-objective}--\ref{eq-no-traing:app-first-term-last-constraint}), we can formulate a sample-based optimization for the optimization problem in Eq.\ (\ref{max-failure-objective-alpha}) in Sec.\ \ref{subsec:validation-method} in the case of using MMD as the discrepancy measure, to compute an upper bound $F_\text{max}$ on the system failure probability $p_\text{fail}$. 
More precisely, assuming that this MMD measure on the TPI output $y^C$ has kernel $k^{y}\equiv k^{C\to C+1}$, we can write this problem (optionally with the monotonicity and Lipschitz conditions mentioned below Eq.\ (\ref{max-failure-objective-alpha}) in Sec.\ \ref{subsec:validation-method} with the optimization variable $\alpha\in{\mathbb R}^V$:

\newcommand{\Kyy}{\ensuremath{{\bf{K}}_y}}
\begin{align}
F_{\text{max}} & = \text{maximize}_\alpha  \sum_{v: g_v>\tau} \alpha_v\label{eq:violation_objective}\\
& \qquad \text{subject to }~ \alpha ^\top \Kyy^{VV}\alpha  - 2\alpha^\top \Kyy^{VM}\frac{\en{n_M}}{n_M} + \frac{\en{n_M}^\top}{n_M} \Kyy^{MM}\frac{\en{n_M}}{n_M} \leq (B^y)^2, \label{eq:violation_mmd_constraint}\\
& \qquad\qquad\qquad~~\alpha\geq0, \qquad\en{V}^\top\alpha =1,\\
&\qquad\qquad\qquad~~\alpha_v \leq \alpha_{v-1}\quad \forall v\text{  with  }g_v\geq \tau'\quad\text{(monotonicity; we take $\tau':=\tau$)},\label{eq:violation_monotonicity_constraint}\\
& \qquad\qquad\qquad~~|\alpha_{v+1}-\alpha_v|\leq\Lambda_\text{max}|g_{v+1}-g_v|\quad \forall v\quad\text{(Lipschitz condition)},\label{eq:violation_lipschitz_constraint}
\end{align}
where we defined kernel matrices on the TPI grid-points $g_v$ and simulation outputs $y^M_n$:
\begin{align}
&\qquad\qquad (\Kyy^{VV})_{v,v'}  = k^y\left(g_v, g_{v'}\right),\nonumber\\
&\qquad\qquad (\Kyy^{VM})_{v,n'}  = k^y\left(g_v, y^M_{n'}\right),\nonumber\\
& \qquad\qquad(\Kyy^{MM})_{n,n'}  = k^y\left(y^M_n, y^M_{n'}\right).\nonumber
\end{align}

The optimization problem (\ref{eq:violation_objective}--\ref{eq:violation_lipschitz_constraint}) has a linear objective and linear constraints except for the quadratic MMD constraint (\ref{eq:violation_mmd_constraint}). 
It is thus a convex optimization problem that can be solved efficiently and exactly, without relaxations (unlike required for the bound optimization (\ref{max-discrepancy-objective-alpha}) in Sec.\ \ref{subsec:validation-method}, see App.\ \ref{app:semidefinite-relaxation}).

The above formulation depends on a set of grid-points $g_v$ and a Lipschitz constant $\Lambda_{\text{max}}$. Ideally, the $\Lambda_\text{max}$ should be a tight upper bound on the Lipschitz constant of the system's TPI output density $p_y$. 
As we do not know this density $p_y$, we use a heuristic estimator of $\Lambda_\text{max}$ computed from histograms of the simulation output distribution $q_y$ as a proxy (see experimental details in App.\ \ref{sec:deta-exper-sect}). 
In order for the solution of (\ref{eq:violation_objective}--\ref{eq:violation_lipschitz_constraint}) to be close to its true value (\ref{max-failure-objective-alpha}) in Sec.\ \ref{subsec:validation-method}, the grid-points need to be sufficiently dense in order to reveal differences as measured by the MMD constraint (\ref{eq:violation_mmd_constraint}) (see also App.~\ref{app:proof-proposition}). 
As the MMD measure $D(p_y,q_y)$ with kernel $k^y$ can also be interpreted as the L2-distance between the corresponding kernel density estimators of the samples from $p_y$ and $q_y$ \citep{gretton2012kernel}, we choose the grid-spacing relative to the lengthscale $\ell$ of the kernel $k^y$; more precisely, in our experiments we require that $g_{v+1}-g_v\leq \frac{\ell}{5}$. 
Additionally, the grid-points should cover the range where the support of both $p_y$ and $q_y$ lies (although $p_y$ is not known). 
For our experiments, we chose the grid range $[g_1,g_V]$ such that it contains all data-points from $q_y$, as well as a significantly large region around the threshold $\tau$ (see App.\ \ref{sec:deta-exper-sect}). 
Even though the system's TPI output distribution $p_y$ is not known, in many applications the plausible (or even the potential) range of TPI values $y^C$ will typically be known from domain knowledge; 
in this case, the grid endpoints should be chosen to cover this range.

Note, that the optimization for the failure probability in Eq.\ (\ref{max-failure-objective-alpha}) in Sec.\ \ref{subsec:validation-method} or in Eqs.\ (\ref{eq:violation_objective}--\ref{eq:violation_lipschitz_constraint}) is also possible for higher-dimensional TPI quantities $y^C$. 
In this case, a specification $y^C\in \mathcal{TPI}_\text{fail}$ of the critical region is required (replacing the specification $y^C>\tau$ in (\ref{eq:violation_objective})).  
Even if this specification is non-linear, the optimization objective will remain linear and the constraints quadratic, again by choosing a grid on the TPI space as in the one-dimensional case.

By construction, the final solution $F_{\text{max}}$ (computed via (\ref{max-discrepancy-objective-alpha}),(\ref{max-failure-objective-alpha} in Sec.\ \ref{subsec:validation-method}), or more concretely via (\ref{eq:tightened-SDR-objective}--\ref{eq:tightened-SDR-last-tightening-constraint}),(\ref{eq:violation_objective}--\ref{eq:violation_lipschitz_constraint})) is an upper bound on the system's true failure probability $F_{\text{max}} \geq p_{\text{fail}}=\int \mathds{1}_{y>\tau}dp_y(y)=\int \mathds{1}_{S(x)>\tau}dS(x)dp_x(x)$  (see Prop.\ \ref{prop:convergence} in Sec.\ \ref{subsec:validation-method} and also App.\ \ref{app:proof-proposition}); 
as a result it can be used for virtual system validation. 
This bound $F_{\text{max}}$ remains valid for any discrepancy measure, choice of kernels or lengthscales. 
For example, when choosing MMD with a very large kernel lengthscale as discrepancy measure, its discriminative power is minimal, resulting in very small discrepancy values and hence small corresponding bounds $B^{c\to c''}$. 
However, for such a measure, it is also difficult to distinguish $p_\alpha$ from the given $q|_{c\to c''}$ in a later bound propagation step, counteracting the small obtained $B^{c\to c''}$ and potentially resulting in a larger final $F_{\text{max}}$. 
We, therefore, use the final $F_{\text{max}}$ as the minimization objective in a Bayesian Optimization scheme \citep{frohlich2020noisy} to select the kernel parameters, see also App.\ \ref{sec:deta-exper-sect}. Another option to select good kernel parameters would be gradient-based minimization of $F_{\text{max}}$ w.r.t.\ to the kernel parameters, which is possible in our framework as the convex programs from Apps.\ \ref{app:semidefinite-relaxation} and \ref{app:violation-optimization} can be (automatically) differentiated \citep{agrawal2019differentiable}.

\section{PROOF AND EXTENSIONS OF PROP.\ \ref{prop:convergence} (SEC.\ \ref{subsec:validation-method})}\label{app:proof-proposition}
\begin{proof}[Proof of Prop.\ \ref{prop:convergence}]
	Under the assumptions {\it{(i)}} and {\it{(ii)}} that the set of validation inputs $\{x^c_v\}_v$ contains  \emph{all} actually occurring input points into $S^c$ and that $p_\alpha=\sum_v\alpha_v\delta_{x^c_v}S^c(x^c_v)$ is built with the \emph{correct} outputs $S^c(x^c_v)$, this set of distributions $p_\alpha$ over which we optimize in (\ref{max-discrepancy-objective-alpha}) (see Sec.\ \ref{subsec:validation-method}) contains the joint real-world distribution of in- and outputs of $S^c$ (i.e.\ the marginal of the real-world distribution $p$ capturing the joint in- and outputs of $S^c$). Thus, if the input bound values $B^{c'\to c}$ were true upper bounds on the actual $D(p|_{c'\to c},q|_{c'\to c})$, then $B^{c\to c''}$ from (\ref{max-discrepancy-objective-alpha}) in Sec.\ \ref{subsec:validation-method} is also a true upper bound on the actual $D(p|_{c\to c''},q|_{c\to c''})$. 
	By induction on $c=1,2,\ldots,C$ we can thus conclude that $B^y\equiv B^{C\to C+1}$ is a true upper bound on the real-world discrepancy $D(p_y,q_y)\equiv D(p|_{C\to C+1},q|_{C\to C+1})$ if only the initial bound values $B^{0\to c}$ were true upper bounds on the actual initial discrepancies $D(p|_{0\to c},q|_{0\to c})$; 
	we assume this last statement about the initial bound values $B^{0\to c}$ to be true since they are supposed to be given in that way (alternatively, the same statement can be concluded with high confidence $\geq1-\delta$ if the $B^{0\to c}$ were computed via samples from $p_x$ \citep{gretton2012kernel}; see end of Sec.\ \ref{sec:setup}). 
	Finally, by the same reasoning, under the assumption {\it{(iii)}} that the set $\{g_v\}_v$ of grid-points contains \emph{all} occurring real-world TPI values, the optimization (\ref{max-failure-objective-alpha}) from Sec.\ \ref{subsec:validation-method} translates the true upper bound $B^y$ into a true upper bound $F_\text{max}$ on the real-world failure probability $p_\text{fail}$.
\end{proof}

Note that further upper-bounding the optimizations (\ref{max-discrepancy-objective-alpha}),(\ref{max-failure-objective-alpha}) from Sec.\ \ref{subsec:validation-method} by relaxations as in App.\ \ref{app:semidefinite-relaxation} leads to valid upper bounds $F_\text{max}$ as well, by the same reasoning as in the above proof.

The assumptions {\it{(i)}} and {\it{(ii)}} of Prop.\ \ref{prop:convergence} are so strong that one basically knows all system maps $S^c$ \emph{explicitly}, at least on all those inputs points that occur in the real world. 
If one would, in addition, know the real-world input distribution $p_x$ (e.g.\ in a sample-based way), one could (in theory) simulate the system map $S$ on all those samples and compute (or at least estimate) the real-world TPI distribution $p_y$ by Monte-Carlo sampling; thus determine the desired $p_\text{fail}$ arbitrarily well. 
However, we do \emph{not} need the input distribution $p_x$ to be known explicitly for our proposed method to be applicable; and we apply our method even when the strong knowledge about the $S^c$ implied by {\it{(i)}} and {\it{(ii)}} is \emph{not} available.

\begin{remark}[Upper bounds in the limit]
	Beyond the strong assumptions of Prop.\ \ref{prop:convergence}, the upper bounds obtained by our method (\ref{max-discrepancy-objective-alpha}),(\ref{max-failure-objective-alpha}) (see Sec.\ \ref{subsec:validation-method}) can be proven to be valid under weaker, more realistic assumptions. 
	This appears possible, for example, in the following scenario, as the numbers $V^c\equiv V$ of available validation data points grow: 
	\emph{(a)} The set of validation inputs $\{x^c_v\}_{v=1}^V$ covers the input space $\mathcal{S}^c_{in}\subset{\mathbb R}^{d^c_{in}}$ of $S^c$ increasingly densely as $V\to \infty$, e.g.\ in the sense that $\max_{x\in{\mathcal S}^c_{in}}\min_{v\in\{1,\ldots,V\}}\|x-x^c_v\|\ \longrightarrow\ 0\ \text{as}\ V\to\infty$; with an analogous condition for the set of grid-points $\{g_v\}_{v=1}^V$ used in (\ref{max-failure-objective-alpha}) from Sec.\ \ref{subsec:validation-method}. 
	\emph{(b)} As discrepancy measures $D^{c'\to c}$ we use MMD distances w.r.t.\ continuous and bounded kernel functions $k^{c'\to c}$. This is satisfied by all kernels used here, such as the squared-exponential and IMQ kernels, even when applied after a data embedding \citep{gretton2012kernel}.
	\emph{(c)} The real-world subsystem maps $S^c:x^c\mapsto S^c(x^c)$ (which are not known explicitly, and whose output is a probability distribution in general) are continuous w.r.t.\ the discrepancy measures $D^{c\to c''}$ at their output. This condition would be implied by continuity w.r.t.\ the Wasserstein or the total variation distances and might in some cases be argued from physical considerations. For deterministic components $S^c$, this requirement simply means that the deterministic mapping is continuous. 
	\emph{(d)} For each validation input $x^c_v$ we have measured a sufficient number $W$ of i.i.d.\ samples $y^c_{v,w}\sim S^c(x^c_v)$ ($w=1,\ldots,W$) from the \emph{true but unknown} output distribution $S^c(x^c_v)$ and we use the sample-based $p_\alpha:=\sum_{v=1}^V\sum_{w=1}^W\frac{\alpha_v}{W}\delta_{x^c_v}\delta_{y^c_{v,w}}$ in the optimization (\ref{max-discrepancy-objective-alpha}) from Sec.\ \ref{subsec:validation-method}. We need $W=W(V)$ to be large enough that the empirical estimate $(1/W)\sum_{w=1}^W\delta_{y^c_{v,w}}$ is sufficiently close to $S^c(x^c_v)$ in the kernel mean embedding \citep{muandet2017KME}, instead of the exact equality required by Prop.\ \ref{prop:convergence}. 
	\emph{(e)} The initial $B^{0\to c}$ must either be true upper bounds on the actual $D(p|_{0\to c},q|_{0\to c})$, or must be computed empirically from an increasing number of i.i.d.\ samples $\{x_v\}_{v=1}^V$ coming from the real-world distribution $p_x$ \citep{gretton2012kernel} (see end of Sec.\ \ref{sec:setup}).
	
	Even under such more realistic circumstances, one may still obtain a  provably valid upper bound $F_\text{max}$ on the real-world failure probability $p_\text{fail}$ via our method in the limit $V\to\infty$ of sufficiently many validation points, at least almost surely over the sampling of $y^c_{v,w}$ (and $x_v$ in the case where $B^{0\to c}$ are estimated from those samples) and up to any additive $\varepsilon>0$ chosen beforehand. 
	This can be shown by following the steps of the proof of Prop.\ \ref{prop:convergence}, replacing each valid upper bound or exact equality by an approximation or limiting argument, using the above assumptions. 
	Even more, when e.g.\ the kernels $k^{c'\to c}$ as well as Lipschitz constants of the maps $S^c$ are known, then more effective statements can be obtained, in the sense that $V$ can then be related to $\varepsilon$ and to the confidence in the final $F_\text{max}$ being a valid bound. 
	
	However, even such more realistic convergence statements would still not be practical in all cases, since e.g.\ the assumption \emph{(a)} generally requires a number of validation inputs $V\gtrsim (1/\delta)^{d^c_{in}}$ \emph{exponential} in the dimensions of the input spaces of the $S^c$ (where $\delta= \max_{x\in{\mathcal S}^c_{in}}\min_{v\in\{1,\ldots,V\}}\|x-x^c_v\|$ denotes the desired set approximation accuracy). 
	We, therefore, take a pragmatic viewpoint, in that we apply our method (\ref{max-discrepancy-objective-alpha}),(\ref{max-failure-objective-alpha}) (Sec.\ \ref{subsec:validation-method}; see also Algorithm \ref{algorithm:DPBound}) even in those cases where we only have a limited amount of validation data available. We evalute this empirically in Sec.\ \ref{sec:reli-benchm-eval}. 
\end{remark}

\begin{remark}[Arbitrary simulation]
	For the justification of our method -- either heuristically or rigorously as above -- it is \emph{not} necessary that the ``simulation distribution'' $q$ be in any sense close to the true system behavior or that the simulations $M^c$ need to be faithful approximations of the actual system components $S^c$. 
	Rather, it suffices that each of the distributions $q|_{\tilde{c}\to\hat{c}}$ (i.e.\ one distribution for each pair $(\tilde{c},\hat{c})$ with $0\leq\tilde{c}<\hat{c}\leq C+1$) has the same value in each of the optimizations (\ref{max-discrepancy-objective-alpha}) and (\ref{max-failure-objective-alpha}) (Sec.\ \ref{subsec:validation-method}); 
	each of these distributions $q|_{\tilde{c}\to\hat{c}}$ simply acts as an ``anchor'' with respect to which the (unknown) real world distribution $p|_{\tilde{c}\to\hat{c}}$ is assessed -- and these anchors need to remain fixed. 
	This means that e.g.\ the full joint distribution $q$ could have been generated by starting from an arbitrarily chosen $q_x$ and arbitrarily ``bad'' models $M^c$, and that the resulting $F_\text{max}$ should remain an upper bound on $p_\text{fail}$ regardless.
	In practice we nevertheless desire the simulations $M^c$ to be close to $S^c$ as we intuitively believe that such closeness should lead to stronger statements about the system via the simulations, i.e.\ that the upper bound $F_ \text{max}$ be as good (i.e.\ small or tight) as possible. 
	We investigate this dependency in the experiments (Sec.\ \ref{sec:experiments}) via the comparison of Perfect Model vs.\ Misfit Model.
\end{remark}

\section{DETAILS ON EXPERIMENTS (SEC.\ \ref{sec:experiments})}
\label{sec:deta-exper-sect}

This section will provide all experiment details to reproduce the result shown in Sec.~\ref{sec:reli-benchm-eval}.
It will contain the description of the reliability benchmark problems, the (input) data generation process, the per-channel/signal kernels and their length scales, the construction of the bias input (i.e. Biased Input), the misfit model (i.e. Misfit Model) and hyper-parameters of the failure probability computation.
Most settings are kept fix across all benchmark problems (described in Sec.~\ref{sec:fix_settings}) and only the biased input construction and kernel parameters vary for each problem (described in Sec.~\ref{sec:per_use_case_settings}).

\subsection{Fixed Settings}
\label{sec:fix_settings}

Across all experiments, the following settings are kept fixed in order to be able to compare the validation performances.

\textbf{Data}:
For all subsystems $S^c$ and sub-models $M^c$ ($c=1,2,...,C$) we fix the number of samples to $V_c=100$ and $n_M=500$, respectively.
All data is generated by a fixed base seed of $2349$.\\
\textbf{Trials}:
All experiments report the mean and standard deviation across $5$ independent trials where all settings are kept identical except the base seed; this is incremented by the 0-index id of each trial.\\
\textbf{Failure probability threshold}:
The threshold for each validation problem is set such that the ground truth failure probability of the (TPI) output of the last subsystem $S^C$ is at approximately $1\%$; $1$ million samples are used to determine the threshold.
The goal of all validation methods across all benchmark problems is to achieve a failure probability as close to $1.0\%$ as possible.
It should be noted that in a real-world system, the ground truth failure probability is not available or is poorly approximated through limited samples.\\
\textbf{Failure probability grid}:
The grid of the failure probability optimization is based on the (TPI) output of the last sub-model $M^C$.
It is chosen to cover the entire support of the model output distribution and a significantly large grid region after the threshold $\tau$. \\
\textbf{Failure probability Lipschitz constant}:
Ideally, the Lipschitz-constant should be set according to the smoothness (or Lipschitz constant) of the system's TPI distribution.
As this is not available, we used the empirical histogram distribution of simulated TPI outputs as a proxy.
More precisely, for each problem a histogram with $100$ bins is constructed to compute the resulting empirical Lipschitz-constant. \\
\textbf{Model misfit -- Gaussian process}:
For the cases where we artificially introduce a model misfit in sub-models $M^c$ ($c=1,2,...,C$) (i.e. ``Model Misfit'' in main Tab.\ \ref{tab:results_benchmark}) we learn a Gaussian process (GP) for each sub-model $M^c$ to model the input-output mapping of the corresponding subsystem $S^c$.
The goal is to introduce a misfit in a controllable fashion which reflects realistic misfits between real-world systems and simulations.
We generate $100$ independent training samples for learning the GP (i.e. the training data is not used to solve the validation problem).
We use exact GP inference in \texttt{GPyTorch} \citep{gardner2018gpytorch} with Radial basis function (RBF) kernels.
The \texttt{LBFGS-scipy} optimizer is used with a learning rate of $1.0\mathrm{e}{-3}$ for $2000$ iterations and $10$ restarts with different initializations.
All misfit models use the exact same GP settings, therefore the amount of misfit varies across benchmark problems and their individual components. \\
\textbf{Biased input}:
In order to test the limits of the system and validation methods, we specifically use a biased (model) input $q_{\text{biased}-x}$ for each benchmark problem.
This bias was constructed such that the TPI model output distribution is shifted farther away from the threshold.
Such a bias distribution exploits the weakness of some validation methods (e.g. MCCP and SurrModel) which do not take the input discrepancy between $p_x$ and $q_x$ into account.
As a result, using the bias input distribution $q_x$ for the model input yields overly optimistic estimates with a severe underestimation the failure probability (i.e. $F_{\text{max}} < F_{\text{GT}} = 1.0$). \\
\textbf{Kernels}:
All experiments either use a radial basis function (RBF) (squared-exponential) kernel or inverse multiquadratic (IMQ) (also known as rational quadratic) kernel~\citep{gorham2017measuring}.
Both kernels use a jitter of $1\mathrm{e}{-10}$, and the IMQ kernel has a fixed $\alpha=-0.5$.
The length scales for both kernels differ for each benchmark problem and channel/signal dimension. \\
\textbf{Length scale search}:
For a given benchmark problem, all length scales across all channels (including each dimension) are optimized to minimize the failure probability $F_{\text{max}}$.
It should be note that as $F_{\text{max}}$ is a probability, its interpretation is independent of the kernel length scales.
As a result, one can perform a kernel length scale search with the objective of minimizing $F_{\text{max}}$.
For all problems, we performed a Bayesian optimization search~\citep{frohlich2020noisy} where the search space of all parameters are kept large ($[1.0\mathrm{e}{-8}, 5.0\mathrm{e}{3}]$), except the length scale of the last TPI kernel which depends on the output range of each problem.
Furthermore, we only perform the length scale search for the setting "Perfect Input" and "Perfect Model" (i.e. top left quadrant in main Tab.\ \ref{tab:results_benchmark}).
The found length scales are kept fixed for all other settings.

\subsection{Reliability Benchmark Problems}
\label{sec:per_use_case_settings}
The following subsections provide all details for each reliability benchmark problem/dataset used in main Tab.\ \ref{tab:results_benchmark}. \\

\subsubsection{Controlled Solvers~\citep{sanson2019systems}}
\textbf{Components}:
This problem has $4$ components (i.e. solvers): the Sobol function, Ishigami function, and the remaining two are products of polynomial functions and trigonometric functions.
\begin{align}
f_1 &:(x_{1:5}) \mapsto \prod_{k=1}^5 \frac{|4x_k - 2| + a_k}{1 + a_k} =:x_6 \\
f_2  &:(x_{6:8}) \mapsto \sin{x_6} + 0.7\sin^2{x_7} + 0.1x_8^4 \sin{x_6} =: x_9\\
f_3  &:(x_{9:14}) \mapsto x_{10}^2 \arctan{1-x_{14}} + x_{11} x_{12} x_{13}^3 + 3x_9 =:x_{15}\\
f_4  &: (x_{15:19})\mapsto \sin{x_{19}}x_{18} + x_{15}x_{16} + x_{17},
\end{align}
where $a=(12, 2, 3, 4, 45)$,
the $x_6 = f_1(x_{1:5})$, $x_9 = f_2(x_{6:8})$, and $x_{15} =  f_3(x_{9:14})$.
This problem is also defined in more detail in Sec.\ 5.5. ``Test Case 3'' in \citet{sanson2019systems} with Fig.\ 14 depicting the causal graph. \\
\textbf{Perfect Input}:
$16$-dimensional input ($x_{\{1,2,...,19\} \setminus \{6, 9, 15\}}$) sampled from $\mathcal{U}_{[0.0, 1.0]}$. \\
\textbf{Biased Input}:
Input signal $x_{18}$ sampled from $\mathcal{U}_{[0.0, 0.8]}$ and the remaining inputs ($x_{\{1,2,...,19\} \setminus \{6, 9, 15\}}$) from $\mathcal{U}_{[0.0, 1.0]}$. \\
\textbf{Model Misfit}:
A GP is learned for each component $M^c$ ($c=1,2,...,C$). \\
\textbf{Failure probability}:
The grid range is fixed to $[g_{\text{min}}=-5.0, g_{\text{max}}=60.0]$ with the threshold at $\tau=14.51$.
A (decreasing) monotonicity constraint is enforced for the range $[\tau - 1.5*\text{grid-spacing}, g_{\text{max}}$.
The Lipschitz constant is set to $0.28$. \\
\textbf{Kernels}:
All channels use a RBF kernel with the following length scales: $[1.0\mathrm{e}{-6}, 5.0e01, 1.0\mathrm{e}{-6}, 5.0\mathrm{e}{1}, 1.0\mathrm{e}{-6}, 1.0\mathrm{e}{-6}, 1.0\mathrm{e}{-6}, 6.397]$ for the channels $[x_{1:5}, x_{6:7}, f_1(\cdot), x_{8:12}, f_2(\cdot), x_{13:16}, f_3(\cdot), f_4(\cdot)]$, respectively.

\subsubsection{Chained Solvers~\citep{sanson2019systems}}
\textbf{Components}:
This problem has $2$ components (i.e. solvers) forming a composition of two univariate functions $f_1$ and $f_2$:
\begin{align}
f_1 &: x \mapsto e^{\sqrt{x}}\sin{x} + 6 e^{-(x-2)^2} + \frac{5}{2} e^{-3(x-1)^2} \\
f_2 &: x \mapsto \sin{x} + 0.3  x  \sin{3.4x + 0.5},
\end{align}
where the global output is $f = f_1 \circ f_2$.
This problem is also defined in more detail in Sec.\ ``5.1. Test Case 1'' in \citep{sanson2019systems} with Fig.\ 4 plotting the signals. \\
\textbf{Perfect Input}:
Univariate input sampled from $\mathcal{U}_{[0.0, 6.0]}$. \\
\textbf{Biased Input}:
Univariate input sampled from a mixture distribution $\alpha \mathcal{U}_{[0.0, 6.0]} + (1-\alpha) \mathcal{U}_{[4.0, 6.0]}$ with $\alpha=0.90$.
The $\alpha$ controls the trade-off between the biasedness and correctness of the support of the resulting distribution. \\
\textbf{Model Misfit}:
A GP is learned for each component $M^c$ ($c=1,2,...,C$). \\
\textbf{Failure probability}:
The grid range is fixed to $[g_{\text{min}}=-8.0, g_{\text{max}}=5.0]$ with the threshold at $\tau=1.459$.
A (decreasing) monotonicity constraint is enforced for the range $[\tau - 1.5*\text{grid-spacing}, g_{\text{max}}$.
The Lipschitz constant is set to $99.0$. \\
\textbf{Kernels}:
The $2$ single-dimensional input channels and the TPI channel use a RBF and IMQ kernel, respectively, with length scales $[1\mathrm{e}{-8}, 1\mathrm{e}{-8}, 1.218]$.

\subsubsection{Borehole~\citep{sim_bench_website}}
\textbf{Components}:

This problem has a single component which models water flow through a borehole:
\begin{align}
f: (r_w, r, T_u, H_u, T_l, H_l, L, K_w) \mapsto \frac{2\pi T_i(H_u-H_l)}{\text{ln}(r/r_w)(1+\frac{2LT_u}{\text{ln}(r/r_w)r_w^2K_w} + \frac{T_u}{T_l})} 
\end{align}
This problem is also defined in more detail at \url{https://www.sfu.ca/~ssurjano/borehole.html}.
We construct two variants of this problem: \texttt{single\_borehole} and \texttt{compositional\_borehole}.
The former considers only the function $f$ above, whereas the latter breaks the function up into multiple (5) smaller components:
\begin{align}
f_1 &: (T_u, H_u, H_l) \mapsto 2\pi T_i(H_u-H_l) \\ 
f_2 &: (r_w, r, T_u, L, K_w) \mapsto (\frac{2LT_u}{\text{ln}(r/r_w)r_w^2K_w} \\ 
f_3 &: (T_u, T_l) \mapsto \frac{T_u}{T_l} \\
f_4 &: (r_w, r, y2, y3) \mapsto \text{ln}(\frac{r}{r_w} (1+y_2+y_3))  \\
f_5 &: (y_1, y_4) \mapsto \frac{y_1}{y_2},
\end{align}
where $y_1=f_1(\cdot)$, $y_2=f_2(\cdot)$, $y_3 = f_3(\cdot)$, and $y_4 = f_4(\cdot)$.\\ 
\textbf{Perfect Input}:
The $8$ inputs are sampled from distributions described in detail at \url{https://www.sfu.ca/~ssurjano/borehole.html}.
The description also includes the range of all signals in the system. \\
\textbf{Biased Input}:
The sampling range of the $H_u$ signal is modified from $[990, 1110]$ to $[990, 1010]$. \\
\textbf{Model Misfit}:
A GP is learned for each component $M^c$ ($c=1,2,...,C$). \\
\textbf{Failure probability}:
The grid range is fixed to $[g_{\text{min}}=-35.0, g_{\text{max}}=600.0]$ with the threshold at $\tau=157.1$.
A (decreasing) monotonicity constraint is enforced for the range $[\tau - 1.5*\text{grid-spacing}, g_{\text{max}}$.
The Lipschitz constant is set to $0.0006$. \\
\textbf{Kernels}:
For the \texttt{single\_borehole}, the RBF kernel is used for both input and output kernels with the following length scales: $[[10.599, 6.587, 24.609, 32.369, 46.431, 23.046, 12.943, 2.734], 23.578]$, respectively.
The first kernel has a multi-dimensional length scale; one for each input dimension.
For the \texttt{compositional\_borehole}, the RBF kernel is used for all $8$ input channels and a IMQ kernel for the TPI output kernel with length scales:
$[5.0\mathrm{e}{+3}, 1.\mathrm{e}{-1}, 3.198\mathrm{e}{+3}, 5.0\mathrm{e}{+3}, 1.0\mathrm{e}{-1}, 5.0\mathrm{e}{+3}, 5.0\mathrm{e}{+3}, 1.0\mathrm{e}{-1}, 5.0\mathrm{e}{+3}, 2.634]$, respectively.

\subsubsection{Branin~\citep{sim_bench_website}}
\textbf{Components}:
This problem has a single component:
\begin{align}
f: x_{1:2} \mapsto f_{\text{max}} - a (x_2 -bx_1^2 + cx_1 - r)^2 + s(1-t)\cos{x_1} + s,
\end{align}
where we the recommended parameters are used: $a = 1$, $b = 5.1/(4\pi^2)$, $c = 5 / \pi$, $r = 6$, $s = 10$ and $t = 1 / (8\pi)$.
In order to map the minimization problem to our maximization setting, we modify the branin function by adding $f_{\text{max}}=312.0$ and subtracted the original formulation thereof.
This problem is also defined in more detail at \url{https://www.sfu.ca/~ssurjano/branin.html}\\
We construct two variants of this problem: \texttt{single\_branin} and \texttt{compositional\_branin}.
The former considers only the function $f$ above, whereas the latter breaks the function up into multiple (3) smaller components:
\begin{align}
f_1 &: x_{1:2} \mapsto  (x_2 -bx_1^2 + cx_1 - r)^2 \\
f_2 &: x_{1:2} \mapsto  (1-t)\cos{x_1} \\
f_3 &: x_{3:4} \mapsto  f_{\text{max}} - ax_3 + sx_4 + s,
\end{align}
where $x_3=f_1(\cdot)$ and $x_4 = f_2(\cdot)$. \\
\textbf{Perfect Input}:
The $2$ inputs are sampled from $\mathcal{U}_{[-5.0, 10.0]}$ and $\mathcal{U}_{[0.0, 15.0]}$, respectively. \\
\textbf{Biased Input}:
The $2$ inputs are sampled from a mixture distribution $\alpha \mathcal{U}_{[-5.0, 10.0]} + (1-\alpha) \mathcal{U}_{[8.0, 10.0]}$ and $\alpha \mathcal{U}_{[0.0, 15.0]} + (1-\alpha) \mathcal{U}_{[12.0, 15.0]}$ with $\alpha=0.10$, respectively.
The $\alpha$ controls the trade-off between the biasness and correctness of the support of the resulting distribution. \\
\textbf{Model Misfit}:
A GP is learned for each component $M^c$ ($c=1,2,...,C$). \\
\textbf{Failure probability}:
The grid range is fixed to $[g_{\text{min}}=-35.0, g_{\text{max}}=700.0]$ with the threshold at $\tau=330.82$.
A (decreasing) monotonicity constraint is enforced for the range $[\tau - 1.5*\text{grid-spacing}, g_{\text{max}}$.
The Lipschitz constant is set to $0.005$. \\
\textbf{Kernels}:
For the \texttt{single\_branin}, the RBF kernel is used for both input and output kernels with the following length scales: $[0.003, 21.161]$, respectively.
For the \texttt{compositional\_branin}, the IMQ kernel is used for all channels with length scales: $[1\mathrm{e}{-8}, 1\mathrm{e}{-8}, 500.0, 1\mathrm{e}{-8}, 26.064]$.

\subsubsection{Four Branch~\citep{UQworld}}
\textbf{Components}:
This problem has $4$ independent components which form four branches and the final global output takes the minimum of the four component outputs:

\begin{align}
f_1 &: x_{1:2} \mapsto 3 + 0.1(x_1 - x_2)^2 - \frac{x_1+x_2}{\sqrt{2}}\\
f_2 &: x_{1:2} \mapsto 3 + 0.1(x_1 - x_2)^2 + \frac{x_1+x_2}{\sqrt{2}}\\
f_3 &: x_{1:2} \mapsto (x_1 - x_2) + \frac{p}{\sqrt{2}}\\
f_4 &: x_{1:2} \mapsto (x_1 - x_2) - \frac{p}{\sqrt{2}}\\
f_5 &: x_{1:2} \mapsto \text{min} \{f_1(x_{1:2}), f_2(x_{1:2}), f_3(x_{1:2}),f_4(x_{1:2})\} + 10,
\end{align}
where $p=6.0$.
This problem is also defined in more detail in \cite{UQworld}, where Fig.\ 1 shows the surface plot of the four branch function. 
We construct two variants of this problem: \texttt{single\_four\_branch} and \texttt{compositional\_four\_branch}.
The former considers only the function $f_4$ above, whereas the latter breaks the function up into multiple (4) smaller components. \\
\textbf{Perfect Input}:
The $2$ inputs are sampled from two normal distributions described as in detail at \cite{UQworld}. \\
\textbf{Biased Input}:
The $2$ inputs are sampled from a mixture distribution $\alpha \mathcal{N}(0.0, 1.0) + (1-\alpha) \mathcal{N}(0.0, 0.90)$ and $\alpha \mathcal{N}(0.0, 1.0) + (1-\alpha) \mathcal{N}(0.0, 0.9)$ with $\alpha=0.80$, respectively.
The $\alpha$ controls the trade-off between the biasness and correctness of the support of the resulting distribution. \\
\textbf{Model Misfit}:
A GP is learned for each component $M^c$ ($c=1,2,...,C$). \\
\textbf{Failure probability}:
The grid range is fixed to $[g_{\text{min}}=0.0, g_{\text{max}}=30.0]$ with the threshold at $\tau=9.693$.
A (decreasing) monotonicity constraint is enforced for the range $[\tau - 1.5*\text{grid-spacing}, g_{\text{max}}$.
The Lipschitz constant is set to $0.005$. \\
\textbf{Kernels}:
For the \texttt{single\_four\_branch}, the RBF kernel is used for both input and output kernels with the following length scales: $[[0.201, 0.198], 10.0]$, respectively.
For the \texttt{compositional\_four\_branch}, the RBF kernel is used for all channels with length scales: $[[2.018, 1.983], 2.472, 2.374, 2.077, 2.077, 10.0]$.

\section{ILLUSTRATIONS OF THE METHODS \& FURTHER EXPERIMENTS (SEC.\ \ref{sec:experiments})}\label{appendix:illustrations}

\begin{figure}[t]
	
	\begin{center}
		\subfigure[]{\label{fig_withUW:b}\includegraphics[width=0.235\textwidth]{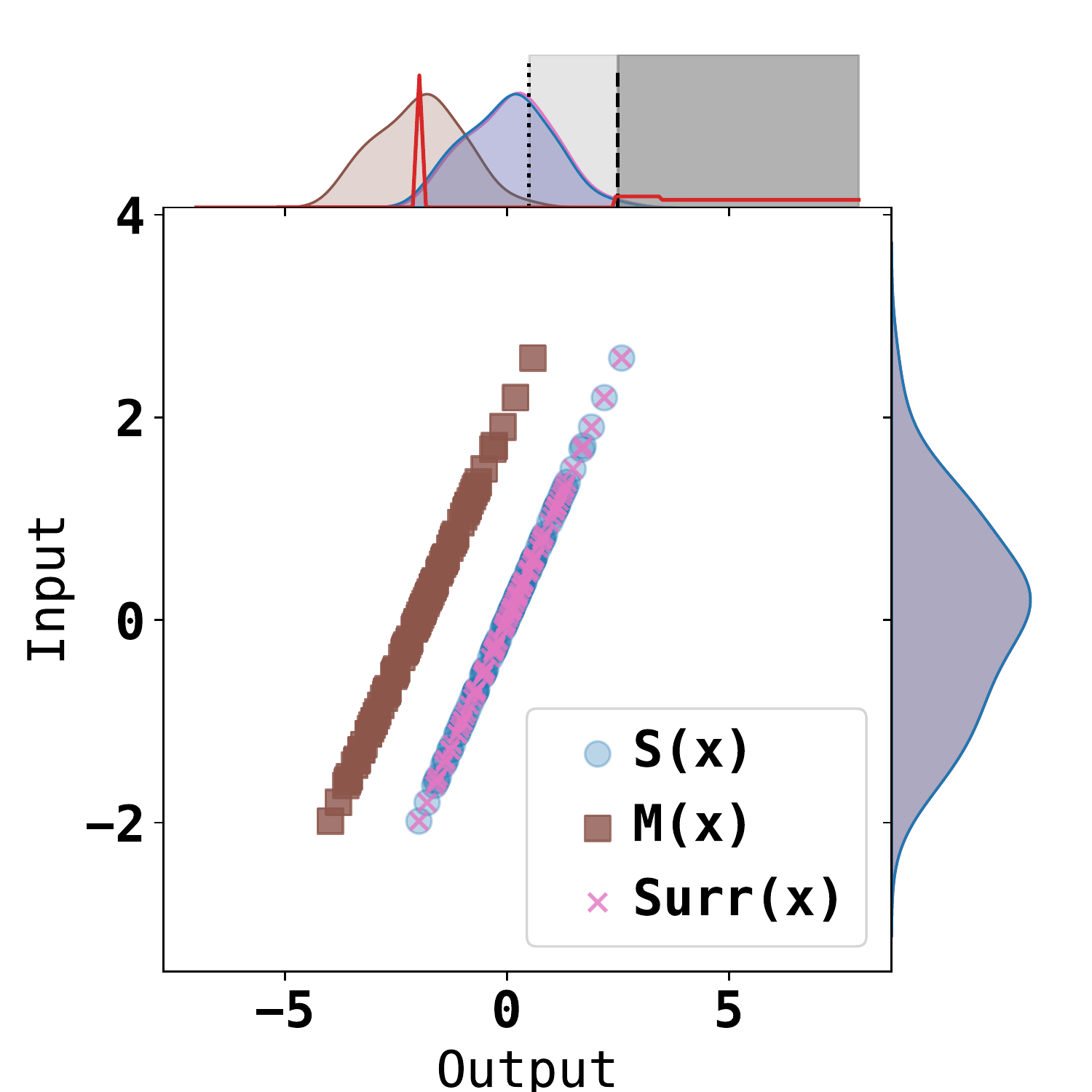}}
		\subfigure[]{\label{fig_withUW:a}\includegraphics[width=0.235\textwidth]{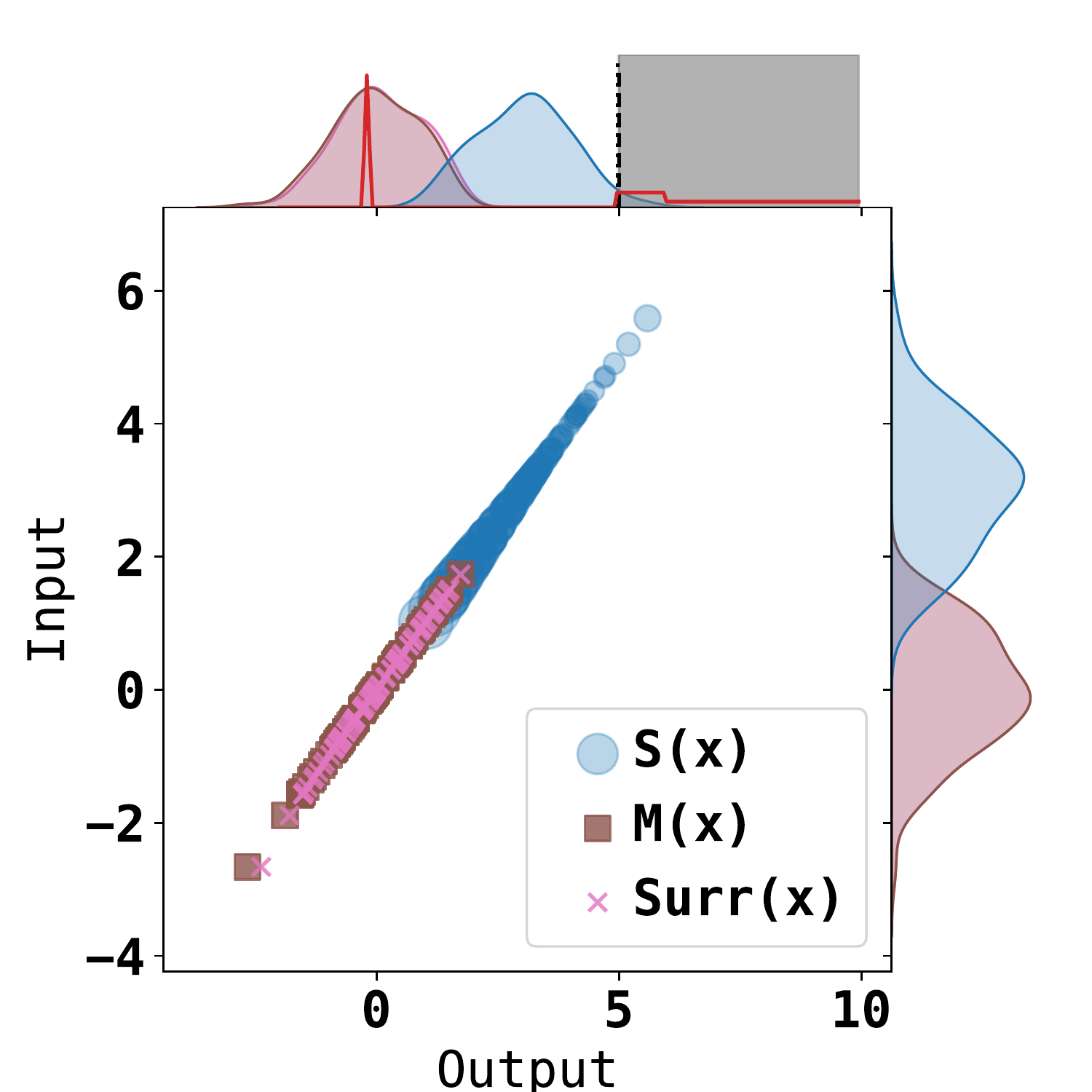}}
	\end{center}
	
	\caption{Illustration of \texttt{DPBound} (see also Fig.\ \ref{fig:linear_use_case_illustration} in the main text) and \texttt{SurrModel} for a linear mapping between Gaussian signals.
		{\textbf{(a)}} Model and system are different $S\neq M$, whereas the input distributions are identical $p_x=q_x$.
		{\textbf{(b)}} The model $M$ is the perfect model, i.e. $S=M$, but input distributions are different. Computed weights $\alpha_v$ (see Eq.\ (\ref{eq:p-alpha-joint}) in Sec.\ \ref{subsec:validation-method})  are indicated by the size of markers for $S(x)$ and the worst-case distributions w.r.t. the failure probability are indicated in red.
		The inputs and outputs of the surrogate model \texttt{SurrModel} are shown in pink.
		\label{fig:linear_use_case_illustration_WITH_UW}}
\end{figure}

\subsection{Surrogate Model \texttt{SurrModel} in the Toy Setting (Sec.\ \ref{sec:experiment_linear_use-case})}
\label{sec:appendix_uw_toy}

In Sec.~\ref{sec:experiment_linear_use-case}, an illustrative example was used to visualize the two configurations considered in the experimental setup.
Here, we additionally analyze the performance of the surrogate model (i.e.\ the \texttt{SurrModel} method from Sec.\ \ref{sec:uncertainty-wrapper-method}) for this single-component linear example under the two configurations.

We illustrated and discussed in Sec.~\ref{sec:experiment_linear_use-case} how \texttt{DPBound} can handle biases in the input distribution, as well as mismatches between the models $S$ and $M$.
On the other hand, the explicit uncertainty estimation with surrogate models fails to handle or detect mismatches in the input distribution, because the estimate of the output distribution arises from surrogate models (albeit learned on the validation data from $S$) run on the input distribution of $M$, thereby completely ignoring the real-world input distribution $p_x$ of $S$. 
To see this, note that the resulting output distribution (pink crosses) of the surrogate model in Fig.\ \ref{fig_withUW:b} lies on top of the system output distribution $S(\cdot)$ (which is different from $M(\cdot)$), whereas in Fig.\ \ref{fig_withUW:a} it basically coincides with the model output distribution $M(\cdot)$ (so that no difference is detected).
Consequently, surrogate models can detect differences due to modeling mismatches $M\neq S$, but not between input distributions $q_x\neq p_x$.

\begin{figure}
	\begin{center}
		\includegraphics[width=0.9\textwidth]{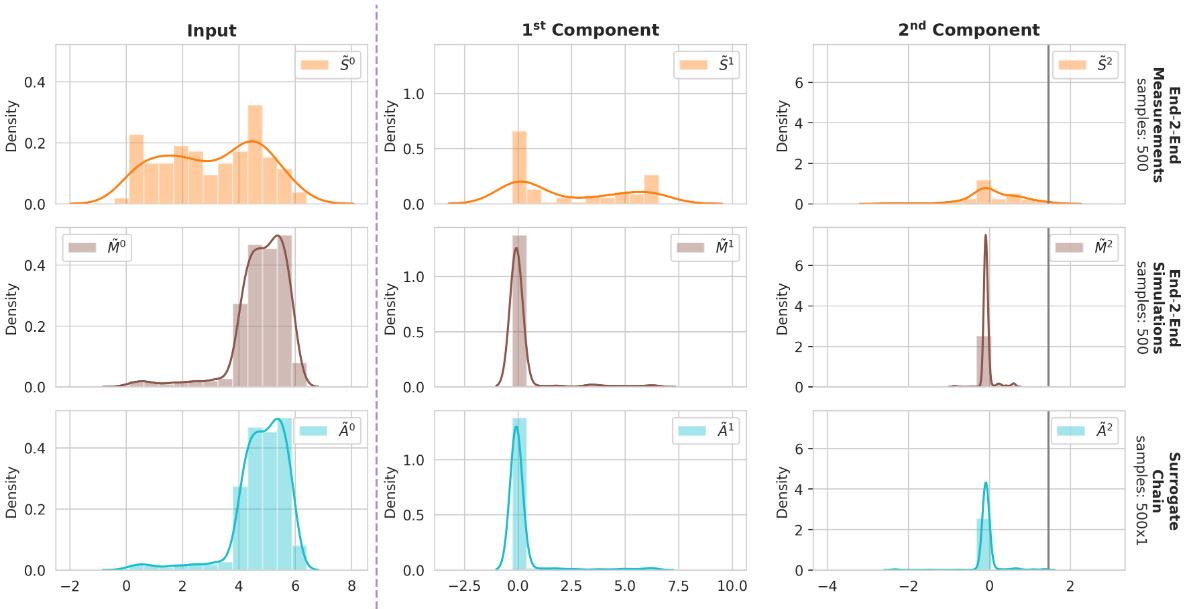}

		\caption{Figure showing the input/output signal distributions for the ``Chained Solvers'' use-case in the setting ``Biased Input--Misfit Model'' (cf.\ main Tab.\ \ref{tab:results_benchmark}; we chose this use-case for illustration purposes, as its signals are one-dimensional). \textbf{Top row:} ground-truth signals (from the system $S$). \textbf{Middle row:} simulation signals (from the model $M$). \textbf{Bottom row:} surrogate model signals (from the model $M'$ in the \texttt{SurrModel} method, see Sec.\ \ref{sec:uncertainty-wrapper-method}). \textbf{Left column:} input distributions. \textbf{Middle column:} output distributions after first component. \textbf{Right column:} final TPI distributions.\label{fig:illustration_signal_propagation}}
	\end{center}
\end{figure}

\begin{figure}
	\begin{center}
		\includegraphics[width=0.9\textwidth]{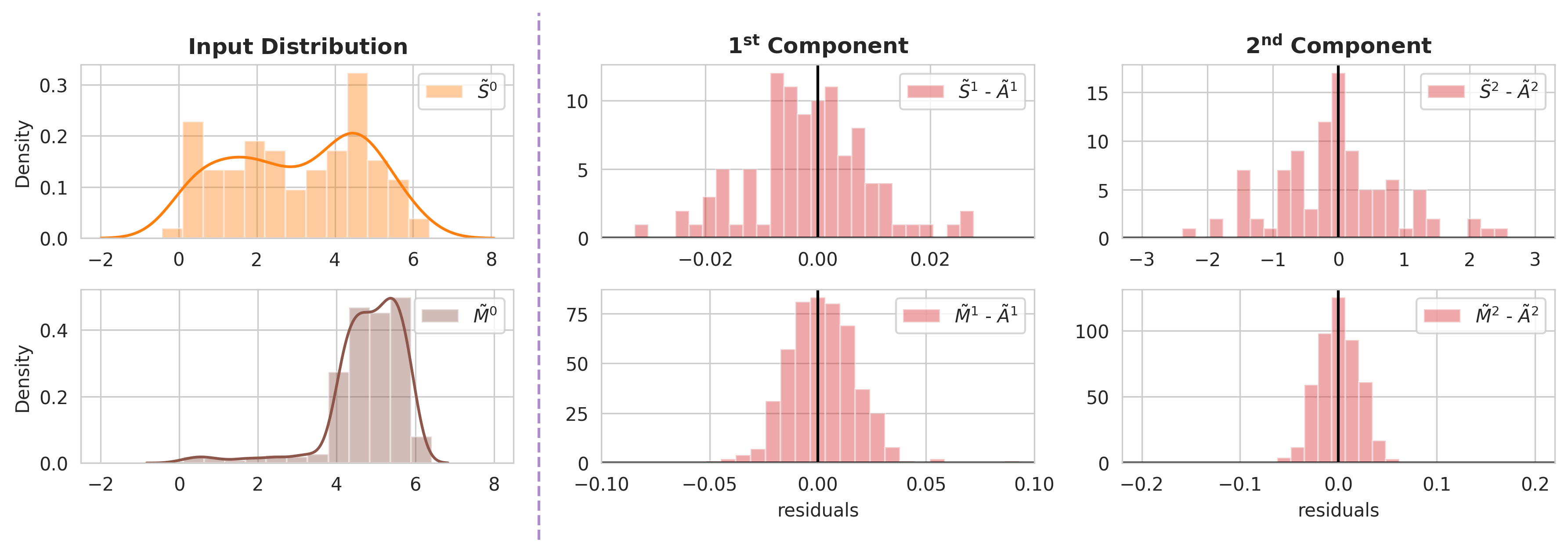}

		\caption{Figure showing the propagation of errors (residuals) of the surrogate model for the ``Chained Solvers'' use-case in the setting ``Biased Input--Misfit Model'' (cf.\ main Tab.\ \ref{tab:results_benchmark}); see Fig.\ \ref{fig:illustration_signal_propagation} for the actual signals. \textbf{Left column:} real-world input distribution (top) and simulation input distribution (bottom; note that the simulation input distribution is biased). \textbf{Top row (2nd and 3rd column):} histograms over residuals (errors) between real system and surrogate model (both starting from the real-world input distribution). \textbf{Bottom row (2nd and 3rd column):} histograms over residuals (errors) between simulation model and surrogate model (both starting from the simulation input distribution), akin to what the \texttt{SurrModel} method uses (see Eq.\ (\ref{eq:delta-surr-model}) in Sec.\ \ref{sec:uncertainty-wrapper-method}). \label{fig:illustration_error_propagation}}
	\end{center}
\end{figure}

\subsection{Illustration of Signal and Error Propagation (for the ``Chained Solvers'' Usecase, Sec.\ \ref{sec:reli-benchm-eval})}\label{sec:signal_propagation_illustration}
To illustrate the propagation of signals through the chain of subsystems, Fig.\ \ref{fig:illustration_signal_propagation} shows the propagation of signals through the system (top row), the model chain (middle row), and the surrogate model chain (bottom row, for the \texttt{SurrModel} method from Sec.\ \ref{sec:uncertainty-wrapper-method}). This is shown for the ``Chained Solvers'' use-case in the setting of ``Biased Input--Misfit Model'' (see Sec.\ \ref{sec:reli-benchm-eval} and lower right quadrant of the main Tab.\ \ref{tab:results_benchmark}); we picked this ``Chained Solvers'' use-case for visualization purposes as it has one-dimensional signals. The mismatches (errors) between those three signals are illustrated in Fig.\ \ref{fig:illustration_error_propagation}.

Fig.\ \ref{fig:illustration_error_propagation} illustrates the propagation of errors (residuals) between system/model and surrogate model through the components, in the same setting as Fig.\ \ref{fig:illustration_signal_propagation} (described in the previous paragraph). As the \texttt{SurrModel} method (Sec.\ \ref{sec:uncertainty-wrapper-method}), starts from the simulation inputs (bottom row of Fig.\ \ref{fig:illustration_error_propagation}), which may be biased w.r.t.\ the real-world input distribution (top tow), the residuals from Eq.\ (\ref{eq:delta-surr-model}) in Sec.\ \ref{sec:uncertainty-wrapper-method} used by \texttt{SurrModel} may be too small (compare lower-right vs.\ upper-right panel in Fig.\ \ref{fig:illustration_error_propagation}), finally leading to an underestimate of the failure probability by \texttt{SurrModel} in Eq.\ (\ref{eq:Fmax_estimate_UW}) (Sec.\ \ref{sec:uncertainty-wrapper-method}). This failure of the \texttt{SurrModel} method is observed in the actual experiments (main Tab.\ \ref{tab:results_benchmark} in the main text), especially for the ``biased-input'' settings.

\subsection{Dependence of \texttt{MCCP} on Its Confidence Level Parameter}\label{app:MCCP99}
In Tab.\ \ref{table:MCCP99} we corroborate our conclusions from the experiments in Sec.\ \ref{sec:reli-benchm-eval} regarding the (in)validness of the \texttt{MCCP} method.

\begin{table}[h!]
	\begin{center}
		\caption{\label{table:MCCP99}Ratio of invalid bounds (i.e.\ bounds below 1\%) produced by the \texttt{MCCP}-method run with a confidence (CL) parameter of 99\% in each of the four simulations configurations, compared (in parentheses) to the ratio for \texttt{MCCP} at 95\% CL parameter from main Tab.\ \ref{tab:results_benchmark}. 
			While the ratio of invalid bounds does decrease with the higher CL parameter of 99\%, the ratio does not decrease in a proportionate way down to one fifth of the ratio at 95\%-CL, especially not for the \emph{Biased Input} settings. 
			The invalidness ratio stays clearly above its 1\% validity promise (except in the easy case of \emph{Perfect Input--Perfect Model}). 
			This indicates that \texttt{MCCP}'s CL parameter is \emph{not} the main reason for its (high) level of invalidity. 
			The main reason is rather \texttt{MCCP}'s ignorance of the system input distribution and of the model misfits (see Sec.\ \ref{sec:reli-benchm-eval}).}

		\begin{tabular}{ |l|c|c| } 
			\hline
			\texttt{MCCP} at 99\% CL (95\% CL) & \textbf{Perfect Model} & \textbf{Misfit Model} \\
			\hline
			\textbf{Perfect Input} & 0\% (0\%) & 5\% (22.5\%) \\
			\hline
			\textbf{Biased Input} &47.5\% (67.5\%) & 42.5\% (67.5\%)\\
			\hline
		\end{tabular}
	\end{center}
\end{table}

\subsection{Tightness of the SDP Relaxation (Lemma \ref{lem:SDR-relaxation})}\label{sec:empirical-tightness}
Here we investigate experimentally how tight our convex (SDP) relaxation of the nonconvex bound optimization in Eq.\ (\ref{max-discrepancy-objective-alpha}) from Sec.\ \ref{subsec:validation-method} is (see also App.\ \ref{app:semidefinite-relaxation}). For this, we evaluate the \emph{minimum approximation ratio} $\widehat{\gamma}\in[0,1]$ of the SDP relaxation, as defined below Lemma \ref{lem:SDR-relaxation} in App.\ \ref{app:semidefinite-relaxation}, for each of the $440$ SDP optimizations required to produce our main results table (Tab.\ \ref{tab:results_benchmark} in Sec.\ \ref{sec:method}, which summarizes $8\cdot4\cdot5=160$ validation runs). Note that $\widehat{\gamma}=1$ would be proof of a perfectly \emph{tight} relaxation, while for example $\widehat{\gamma}=0.99$ guarantees that the relaxation was tight up to at most $1\%$. These tightness results are summarized in Tab.\ \ref{table:SDPtightness}.

\begin{table}[h!]
	\begin{center}
		\caption{\label{table:SDPtightness}Frequency of \emph{minimum approximation ratios} $\widehat{\gamma}$ of the SDP relaxations (defined below Lemma \ref{lem:SDR-relaxation} in App.\ \ref{app:semidefinite-relaxation}) for the $440$ SDP relaxations required to produce Tab.\ \ref{tab:results_benchmark} in Sec.\ \ref{sec:method}. Note that the true but unknown approximation ratio $\gamma$ of each SDP relaxation satisfies $\widehat{\gamma}\leq\gamma\leq1$. Thus, over all $160$ validation tasks from the main Tab.\ \ref{tab:results_benchmark}, $87.3\%$ of the $440$ required SDP bound optimizations are guaranteed to be at least $99\%$-tight.}
		
		\begin{tabular}{ |l|c|c| } 
			\hline
			minimum approximation ratio $\widehat{\gamma}$ & \# of SDP optimizations & \% of SDP optimizations \\
			\hline
			$0.99\leq{\widehat{\gamma}}\leq1.0$ & $384$ & $87.3\%$ \\
			$0.9~~\leq{\widehat{\gamma}}<0.99$ & $~~15$ & $~~3.4\%$ \\
			$0.1~~\leq{\widehat{\gamma}}<0.9$& $~~29$ & $~~6.6\%$ \\
			$0.0~~\leq{\widehat{\gamma}}<0.1$ & $~~12$ & $~~2.7\%$ \\
			\hline
			total \# of SDP optimizations & $440$ & $~100\%$ \\
			\hline
		\end{tabular}
	\end{center}
\end{table}

\subsection{Computational Cost \& Runtime}\label{sec:runtime}
The computational complexity of the method in terms of the validation data set size(s) and the number of components is discussed at the end of App.\ \ref{app:semidefinite-relaxation} (see also below Eq.\ (\ref{max-discrepancy-objective-alpha}) in the main text).

Empirically, the runtime required for the 160 validation runs of \texttt{DPBound} to produce Tab.\ \ref{tab:results_benchmark} on our desktop machine is 2 hours; this runtime is dominated by the semidefinite optimization steps (where we use Eqs.\ (\ref{eq:tightened-SDR-objective}--\ref{eq:tightened-SDR-last-tightening-constraint}) in place of Eq.\ (\ref{max-discrepancy-objective-alpha}), and use the concrete form Eqs.\ (\ref{eq:violation_objective}--\ref{eq:violation_lipschitz_constraint}) in place of Eq.\ (\ref{max-failure-objective-alpha})). On those same 160 validation problems, the \texttt{MCCP} method takes 2min (as \texttt{MCCP} must only propagate model inputs through the given model chain, with no optimizations to do), while \texttt{SurrModel} takes 25min (mainly spent on fitting the surrogate Gaussian Process models).

\end{document}